\newenvironment{customthm}[1]
  {\innercustomthm}
  {\endinnercustomthm}
\title{Critical Contours: An Invariant Linking Image Flow\\ with Salient Surface Organization\thanks{Received by the editors
September 1, 2017; accepted for publication (in revised form) March 26, 2018; published electronically July 12, 2018.
\URL{siims/11-3/M114552.html}\funding{The research of the authors was supported by the NSF, the Paul G. Allen Family Foundation, and the Simons Foundation.}}}
\author{Benjamin Kunsberg\thanks{Division of Applied Mathematics, Brown University, Providence, RI 02906 (\href{mailto:benjamin_kunsberg@brown.edu} {benjamin\_kunsberg@brown.edu}).}
\and Steven W. Zucker\thanks{Department of Computer Science, Yale University, New Haven, CT 06511 (\email{zucker@cs.yale.edu}).}}
\begin{document}

\slugger{siims}{2018}{11}{3}{1849--1877}
\maketitle

\setcounter{page}{1849}

\begin{abstract}
We exploit a key result from visual psychophysics---that individuals perceive shape qualitatively---to develop the use of a
geometrical/topological ``invariant'' (the Morse--Smale complex) relating image structure with surface structure. Differences
across individuals are minimal near certain configurations such as ridges and boundaries, and it is these configurations that
are often represented in line drawings. In particular, we introduce a method for inferring a qualitative three-dimensional shape from shading
patterns that link the shape-from-shading inference with shape-from-contour inference. For a given shape, certain shading
patches approach ``line drawings'' in a well-defined limit. Under this limit, and invariably with respect to rendering choices,
these shading patterns provide a qualitative description of the surface. We further show that, under this model, the contours
partition the surface into meaningful parts using the Morse--Smale complex. These critical contours are the (perceptually) stable
parts of this complex and are invariant over a wide class of rendering models. Intuitively, our main result shows that critical
contours partition smooth surfaces into bumps and valleys, in effect providing a scaffold on the image from which a full surface
can be interpolated.
\end{abstract}

\begin{keywords}
shape perception, 3D shape reconstruction, orientation fields, shape from shading, shape from line drawings,
nonphotorealistic
rendering, Morse--Smale complex, combinatorial topology
\end{keywords}

\begin{AMS}
53A45, 68T10, 68T45
\end{AMS}

\begin{DOI}
10.1137/17M1145525
\end{DOI}

\section{Introduction}

Mathematically it is well known that the problem of inferring shape from shading information---or from contour information---is ill-posed: there exist many possible surfaces that could give rise to (almost) any type of image structure.  In our everyday experience, however, we unconsciously ``solve'' this inverse problem routinely; we readily and effortlessly infer three-dimensional (3D) shape from ambiguous image information.  This presents a huge conundrum for theorists: does there exist an invariant that could ground inferences about surface structure on the many different types of image structures and, if so, how might this invariant be used by both brains and machines? In this paper we answer the first part of this question in the affirmative, motivated by two aspects of the second part.

Our approach is motivated by an important (but frequently overlooked) property of human perception:  different individuals (or the same individual at different times) perceive quantitatively different but qualitatively similar surfaces---not identical ones---from either shading or contour information (references in the background section below). We take this property to be key: the goal is not to find a unique map between an image and a surface, but rather to identify an equivalence class structure, i.e., to identify which classes of images are consistent with which classes of surfaces. The identification map is then at this abstract level. As we shall show, there are parts of images that do indeed provide a kind of scaffold from which (parts of) surfaces can be reconstructed. We call this scaffold {\it critical contours}; in this paper we define them, prove that they are part of a global description, and characterize their invariance over different rendering models. Basically, we show that critical contours in the image are nearly equivalent to critical contours of the surface (slant) function.

We concentrate on image ridges, a geometrical construct that has been important in vision for some time. Viewing the image as a height function,  ridges seem intuitively connected to image edges \cite{haralick1983ridges, eberly1994ridges}, especially those that arise within the interior of a shape \cite{Hallinan:1999:TTP:307403}. For similar reasons, ridges also relate to features of
surface relief \cite{koenderink1994two}. But, to our knowledge, image ridges and surface relief have not been formally identified with one another, except for specific rendering models (e.g., Lambertian). We establish this connection generically; see Figure~\ref{fig:blob}.

\begin{figure}[t]
\begin{center}
\includegraphics[width = 1 \linewidth]{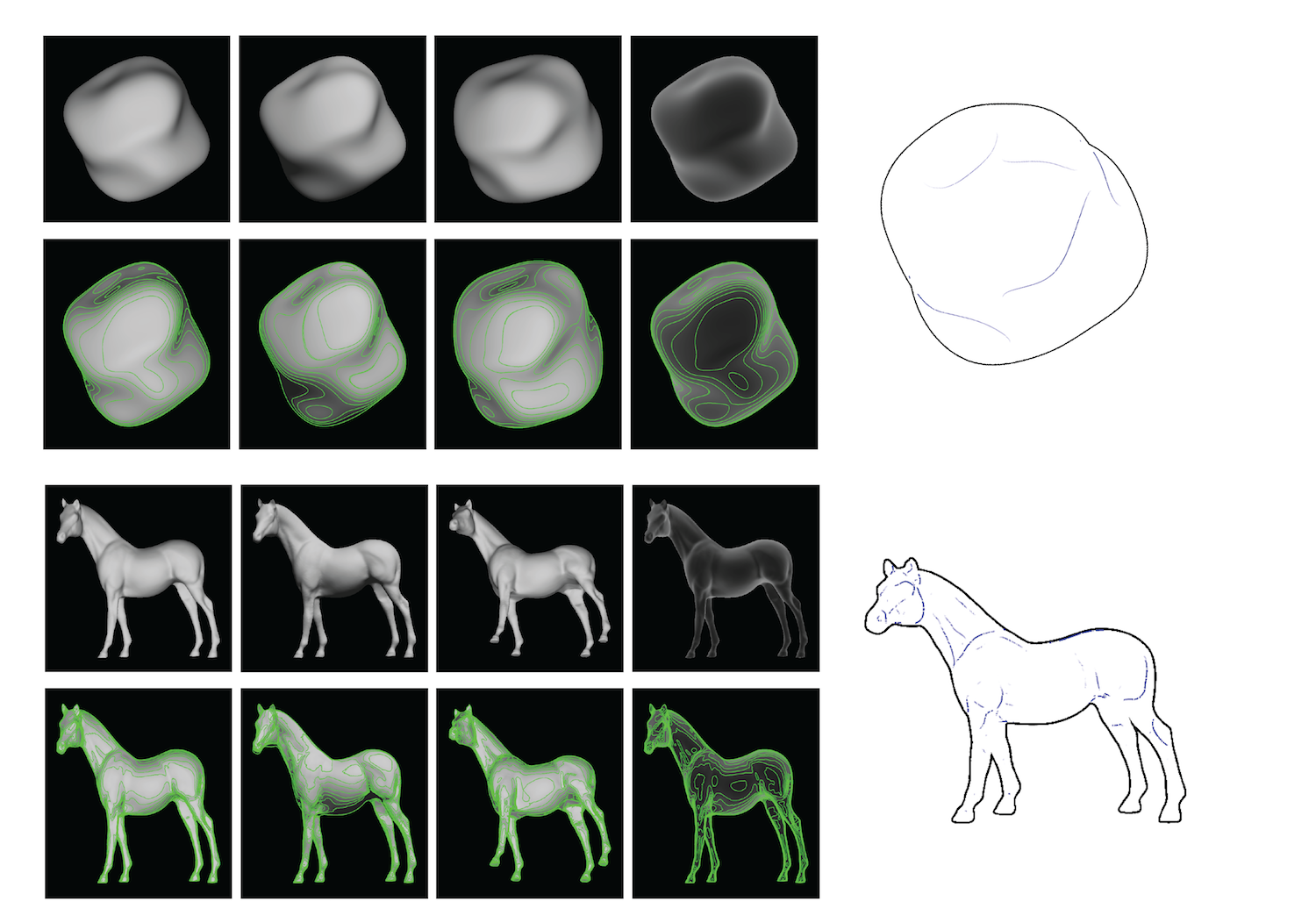}
\caption{Qualitative descriptions of $3$D shapes from unknown cues are stable.
Row {\rm 1:} A random shape with different lightings, a slight rotation, and a nonmonotonic transformation ($\arctan(x)$) as in {\rm \cite{Fleming14}}.
Row {\rm 2:} Isophote patterns generally vary for each corresponding case above yet are stable in some positions (e.g., along the sides of the protrusion).
Rows {\rm 3, 4:} Similar to Rows $1$, $2$ but with a horse model.
Right: Suggestive contours {\rm \cite{DeCarlo03}} for the same models. Note the similarity between the stable flow patterns and the suggestive contour positions.
The method we introduce is inspired by such line drawing images. We will develop this visual commonality (critical contours) using vector field topology.\vspace*{-2pt}
\label{fig:blob}}
\end{center}
\end{figure}

The development proceeds in three main steps.
We first exploit ridge structure to motivate a limiting process that characterizes how  shading distributions concentrate into contours. Second, when the shading distribution orthogonal to the contour is sufficiently ``steep,'' it becomes what we define as a {\em critical contour}. These are special contours that capture the edge connection alluded to above and resemble artists' line drawings. Building on the surface relief view, they form part of a global, topological network that separates ``hills'' from ``dales.''  Formally this network comprises the Morse--Smale (MS) complex on the image \cite{Gyulassy08} and is built with integral curves through the (image) gradient flow that connect maxima, saddles, and minima in a prescribed manner.
The MS idea has a rich history in geology \cite{maxwell1870hills, cayley1859xl} but in the modern form is based on singularities of gradient flows \cite{smale1961gradient, milnor2016morse}. This has three important consequences:
(i) it allows a principled (global) simplification process to remove insignificant ``bumps'' \cite{edelsbrunner2010computational}; (ii) the flows ground the computations in
physiologically meaningful terms \cite{Hubel88}; and (iii) it allows the contours to be interpreted as boundaries of surface parts.

Finally, we show that critical contours are part of the MS complex in a generic sense.
Since the natural world is hardly Lambertian, we consider a general class of rendering functions that require little more than dependence on the surface normal (or tangent plane at that point). We provide a completeness theorem: if there is a critical contour in the image for a given surface with one rendering function in the class, then there is a critical contour in the image for every rendering function in this class and, furthermore, they coincide under the limiting process mentioned above.
This third result has an unexpected implication: Since the surface slant function \cite{Stevens83}
is in our class of rendering functions, critical contours from the image and critical contours from the surface slant are, in the limit above, equivalent. Thus we relate image-derived properties directly to surface properties.

Since the MS complex is global, there is a shared partition between the image and the surface.
By identifying certain contour inferences with shading inferences, it also constrains how the surface can be ``filled in'' between the image contours. But it does not reduce the reconstructed surface to a singleton. In effect, by working between the geometry of ridges and the topology of surfaces, we are able to find a foundation for qualitative surface inferences. A roadmap for our approach is shown later in
Figure~\ref{fig:overview}; it is described in more detail following the background review.

\section{Background}

Standard computational approaches to shape-from-shading rely on either imposing strong priors (on light sources \cite{sun1998sun}, reflectance models \cite{langer1994shape}, etc.) or imposing a form of regularization tied to a reflectance model (reviews in \cite{horn89,horn85, faugeras, zheng91}; see Mach \cite{mach} for the original); in either case the goal is a single, unique surface from among the different possibilities \cite{Oliensis1991, Saxberg92}. This remains problematic:  Recent attempts are brittle for related reasons---they work for some images but not others, largely because of reliance on artificial reflectance models, bas relief priors \cite{Bel99},  a delicate combination of regularization terms \cite{Barron:2012tt}, or training on restricted scene classes \cite{2012arXiv1206.6445T, DBLP:journals/corr/EigenF14,
DBLP:journals/corr/ChakrabartiSS16}. We move beyond this brittleness by seeking a qualitative solution.

\subsection{Surface perception is qualitative}

Results in visual psychophysics question the goal of seeking a unique surface.  While subjects tend to agree on the overall shape,{\vadjust{\pagebreak}} constancy is elusive and percepts differ quantitatively; see \cite{doi:10.1167/12.1.12, Mooney20142737, MAMASSIAN19962351,Mingolla1986, doi:10.1068/i0645, doi:10.1167/15.2.24, CHRISTOU19971441, doi:10.1167/15.2.24, Seyama19983805,Curran19961399, doi:10.1068/p5807, doi:10.1068/p251009}. Remarkably, this lack of constancy holds even for special shapes such as cylinders (but see \cite{tensors-arxiv-1,tensors-arxiv-2}).
One possibility is that there are different operational modes \cite{doi:10.1167/12.1.12}; another is that priors are applied selectively to advantage \cite{doi:10.1167/13.5.10}. We propose that the solution is topological in nature.

The idea of different modes is consistent with the view, prominent in computer vision, that shape-from-contour is a separate problem from shape-from-shading: Contours are one-dimensional entities, while shading is a two-dimensional distribution of intensities. For contours the emphasis tends to be on junctions \cite{Waltz75understandingline}---the places where surfaces join---to make sure that the surfaces ``fit'' together properly; see \cite{Barrow:1981:ILD:3015381.3015385, Stevens82, Malik1987, Huffman:1976:CCP:1311922.1312015}. Again, research in visual psychophysics provides a challenge to this separate view; mutual influences between contours and shading are well documented \cite{Ramachandran:1988wo,todorovic2014shape}. We shall show that well-defined, salient, and stable contours can arise out of shaded images; thus both contours and shading play the role of defining surface parts.  In our view, shape-from-shading and shape-from-contour are deeply related inverse problems; another of our contributions is to show how via a limiting process.


While the inference problem works on the inverse direction,
the relationship between shading and contour in the forward direction is well established.
Nonphotorealistic rendering algorithms are used as visualization techniques for given surfaces \cite{Lawonn2016, peikert1999parallel}, which shows how rich surface information can be conveyed to a viewer; this is not unlike what artists draw \cite{Cole09}.
For example, ``suggestive contours'' \cite{DeCarlo03} are built from the loci of points (in the image) where the object almost occludes itself {\it when computed from the surface} \cite{DeCarlo03}; see also \cite{Judd07, Sahner08} for related forward computations. Our critical contours---which work for the inverse problem---are related to, but not identical with, suggestive contours.  More details on the relationship can be found in the appendix.

Folds in material are a prominent example of when suggestive contours are useful (see da Vinci's notebooks and \cite{Lee:2007:LDV:1276377.1276400, Gingold:2008:SSE:1360612.1360694,Jung:2015:SFD:2843519.2749458}); we exploit the fact that folds have rather structured shading across them \cite{kunsberg2014shading}.
They tend to occur along extended anisotropic curvature regions of the shape \cite{Lawlor200918} and
are related to ridges in computer vision. Detecting these areas of rapid change in image intensity is a well-studied problem \cite{haralick1983ridges, lopez1999evaluation}, although
the characterization often remains local in terms of differential geometry
\cite{Hallinan:1999:TTP:307403} or singularity theory \cite{damon2016local}. But this local characterization may not yield globally connected patterns, and ``small'' ridges are no different from ``steep'' ones. Many have explored a multiscale approach to deal with these difficulties
\cite{lindeberg1998edge, pizer1998zoom, griffin1995superficial, eberly1994ridges}; our work exploits
a topological multiscale idea.

The transition to global patterns from local descriptors is necessary, and the classical work on ``hills and dales'' \cite{maxwell1870hills, cayley1859xl, rothe-talweg} provides a way forward. It considers the integration of vector fields, and
applications to describing waterways \cite{moore1991terrain, barnes2014priority} and images \cite{lonquet1960reflection, koenderink1998structure, griffin1995superficial, beucher1992morphological, lopez1999evaluation,
koenderink1994two} abound. Putting these together, the idea is to view the image as a ``landscape,'' with ``height'' proportional to intensity; ``water'' then flows from the peaks to the valleys. But viewing the image as a landscape does not provide
a formal connection back to the underlying surface from which it was rendered. Our critical contours, also computed from the image, will relate directly to the image landscapes sought by these ridge detectors but will further have a connection to the underlying surface.

It is now that we appeal directly to the psychophysical observation above; that perception is qualitatively similar but not quantitatively identical across subjects \cite{doi:10.1167/12.1.12, Mooney20142737, MAMASSIAN19962351,Mingolla1986, doi:10.1068/p5179, doi:10.1068/i0645, doi:10.1167/15.2.24, CHRISTOU19971441, doi:10.1167/15.2.24, Seyama19983805,Curran19961399, doi:10.1068/p5807, doi:10.1068/p251009, doi:10.1167/12.1.2}, plus many others.
To us ``qualitative'' implies that we should be seeking that family of surfaces which are ``locked down'' by the image. Since global, qualitative constructions are the domain of topology, it is here that the MS complex \cite{Gyulassy08, edelsbrunner2010computational}
is central. We review the MS complex in the next section; later we shall show that the critical contours are 1-cells of the MS complex of the shading function with high transversal intensity changes.
It follows, then, that they are also 1-cells of the slant (foreshortening) function on the surface.

\subsection{The Morse--Smale complex}

Our goal is to find patterns in the image, computable from orientations and invariant to a large class of rendering functions, that ``anchor'' the ill-posed shape-from-shading problem in a qualitative manner.   We were inspired by representations of the phase space in nonlinear dynamics and wish to understand how particular contours on the image can constrain global, qualitative shape. It formalizes an intuition from Koenderink (and Picasso): that ``vision grasps shape as a hierarchical structure of elliptic patches'' \cite{Koenderink82Perception}.
For this, we use the MS complex. Like the watershed algorithms referenced above, the gradient flow is used to assign different regions of the domain of critical points; 2D contours separate these domains into monotonic regions (called 2-cells); these regions are then the ``parts'' of the shape.
Importantly, associated with the MS complex is
persistence simplification, a principled way to collapse critical points (equivalently, merge the watershed regions) to create a hierarchy \cite{Gyulassy08}.  This is the multiscale component of our approach.
This introduction is necessarily brief. More complete treatments can be found in \cite{milnor2016morse, forman1998morse, forman2002user, matsumoto2002introduction, Biasotti:2008:DSG:1391729.1391731} and, for motivation, see \cite{milnor1997topology}. See Figure \ref{Fig:MS}  for an illustration.

Given a $2$-manifold $\mathbb{M}$, consider a smooth scalar function $f(x, y): \mathbb{M} \rightarrow \mathbb{R}$.  (Later, we will consider $M = \mathbb{R}^2$ and $f$ as an image of a surface.)  The \emph{gradient}
$\nabla f = \left( \partial f / \partial x, \partial f / \partial y \right)$
exists at every point.  A point $p \in \mathbb{R}^2$ is called a \emph{critical point} when $\nabla f (p) = 0$.  The function $f$ is a \emph{Morse function} if all its critical points are nondegenerate (meaning the Hessian at those points is nonsingular) and if no two critical points have the same function value.

The gradient field gives a direction at every point in the image, except for the critical points, a set of measure zero. Following the vector field will trace out an \emph{integral line}.  Precisely, an integral line is a maximal path on the image whose tangent vectors agree with $\nabla f$ at every point of the path.  These integral lines must end at critical points, where the gradient direction is undefined.  Thus, one can define an \emph{origin} and a \emph{destination} for each integral line.  Further, for each critical point,  its \emph{ascending manifold} is defined as the union of integral lines having that critical point as a common origin.  Similarly, its \emph{descending manifold} is the union of integral lines with that critical point as a common destination.

The type of each critical point is defined by its \emph{index}: the number of negative eigenvalues of the Hessian at that point.  For scalar functions on $\mathbb{R}^2$, there are only three types: a maximum (with index 2), a minimum (with index 0), and a saddle point (with index 1).  MS functions satisfy an additional transversality condition and are dense in the set of continuous functions.  For these, the integral lines only connect critical points of differing index.  The ascending manifold associated with a critical point of index $k$ is of dimension $2 -k$.  Similarly, the descending manifold for an index $k$ critical point is dimension $k$.

For two critical points $p$ and $q$, with the index of $p$ one greater than the index of $q$,  consider the intersection of the descending manifold of $p$ with the ascending manifold of $q$.  This intersection will be either a 1D manifold (a curve called a 1-cell or watershed) or the empty set.   For two critical points $r$ and $s$, with the index of $r$ two greater than the index of $s$, the intersection of the descending manifold of $r$ with the ascending manifold of $s$ will either be a 2D manifold (a region called a 2-cell) or the empty set.  Thus, the intersection of all ascending manifolds with all descending manifolds partitions the manifold $\mathbb{M}$ into 2D regions surrounded by 1D curves with intersections at the critical points.

The MS complex is a structure that relates a set of contours to a qualitative function representation.  With knowledge only of the scalar function at the critical points and 1-cells, one could reconstruct the 2-cells (and thus the entire function) relatively accurately.  For some insight, see \cite{Giorgis15, Weinkauf10}.  The position, heights, and boundaries of all the bumps, dimples, and ridges are already known and the choices left are how steep to make the transitioning 2-cells in between.  Thus, there is a natural connection between the scalar function restricted to the 2D curves (the salient 1-cells) and the scalar function on the entire domain (the unknown 2-cells).
Our main theorem will show that particular 1-cells of the image will be nearly invariant under changes in the rendering function.

\begin{figure}[!b]\vspace*{-6pt}
\begin{center}
 \includegraphics[width = 0.7 \linewidth]{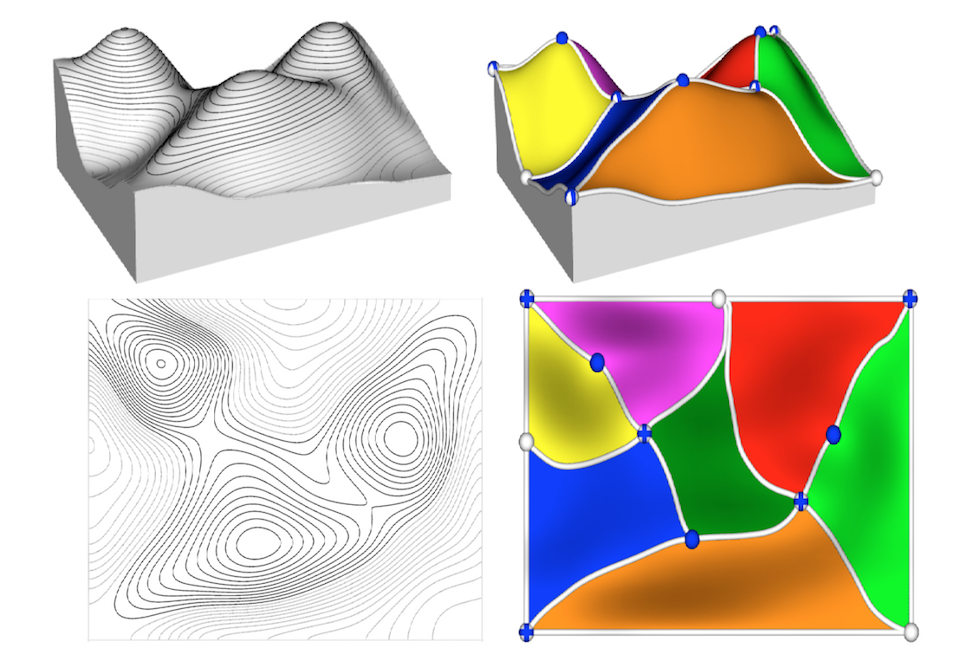}
\caption{Illustration of the MS complex for a scalar function in two dimensions.
(left column) A ``mountain range'' seen from the side and from above. Contours are level sets in height. If this scalar function was image intensity, the level curves would be isophotes. Colored regions represent $2$-cells of the MS complex.  White curves represent $1$-cells (contours) of the MS complex.  Maxima, saddles, and minima are represented by solid blue points, crosses, and solid white points, respectively. Each $2$-cell is combinatorially a quadrilateral, but it is possible that the saddles may be identified together creating a loop. This figure illustrates how a set of $2$D contours can represent the $3$D surface, up to monotonic transformations on each region. Figure from {\rm \cite{Gyulassy08}}.  \label{Fig:MS}}
\end{center}
\end{figure}

\subsection{Biological considerations}

Since our approach is partly motivated by psychophysical considerations, we also consider the underlying physiology; Connor \cite[Figure~2]{Connor08}, shows the sensitivity of higher-level neurons to ridge-like structure. Our concern here is how this process can get started.
Orientation-selective neurons in the visual cortex are the natural substrate for representing shading information as a flow pattern \cite{Koenderink:1980bm, zucker-sff, Zucker04}, and it clearly relates to the gradient flow above. Shape-from-shading-flow computations have been analyzed \cite{Kunsberg2014}, and research supports it \cite{fleming, doi:10.1167/14.7.1, DanVSS13, doi:10.1167/9.11.10, doi:10.1167/14.7.1}.
But Todd \cite{doi:10.1068/i0645, doi:10.1167/15.2.24}, among others, has questioned (forcefully and, to us, in an influential way) whether isophotes and shading flows suffice, because the isophote pattern changes significantly for different renderings and lightings of the same object (cf. Figure~\ref{fig:blob}, row 2), but our perception hardly varies (with regard to shape).  If our percepts were based on the isophotes alone, then they, too, Todd argues, should also change. But isophotes change more in some places than others, and the conditions in our shading limit proposition identify precisely those locations where the isophote structure remains invariant.  Anchoring the shape reconstruction on the locations where the isophote structure is consistent could explain how it is possible for brains to make robust (but qualitative) inferences about shape in three dimensions. Neural responses should be robust around critical contours, but not necessarily elsewhere, which implies that different positions are represented differently. (Earlier computational approaches treat all positions as equals.)

\subsection{Overview}

\begin{figure}[h!]
\begin{center}
\includegraphics[width = 1 \linewidth]{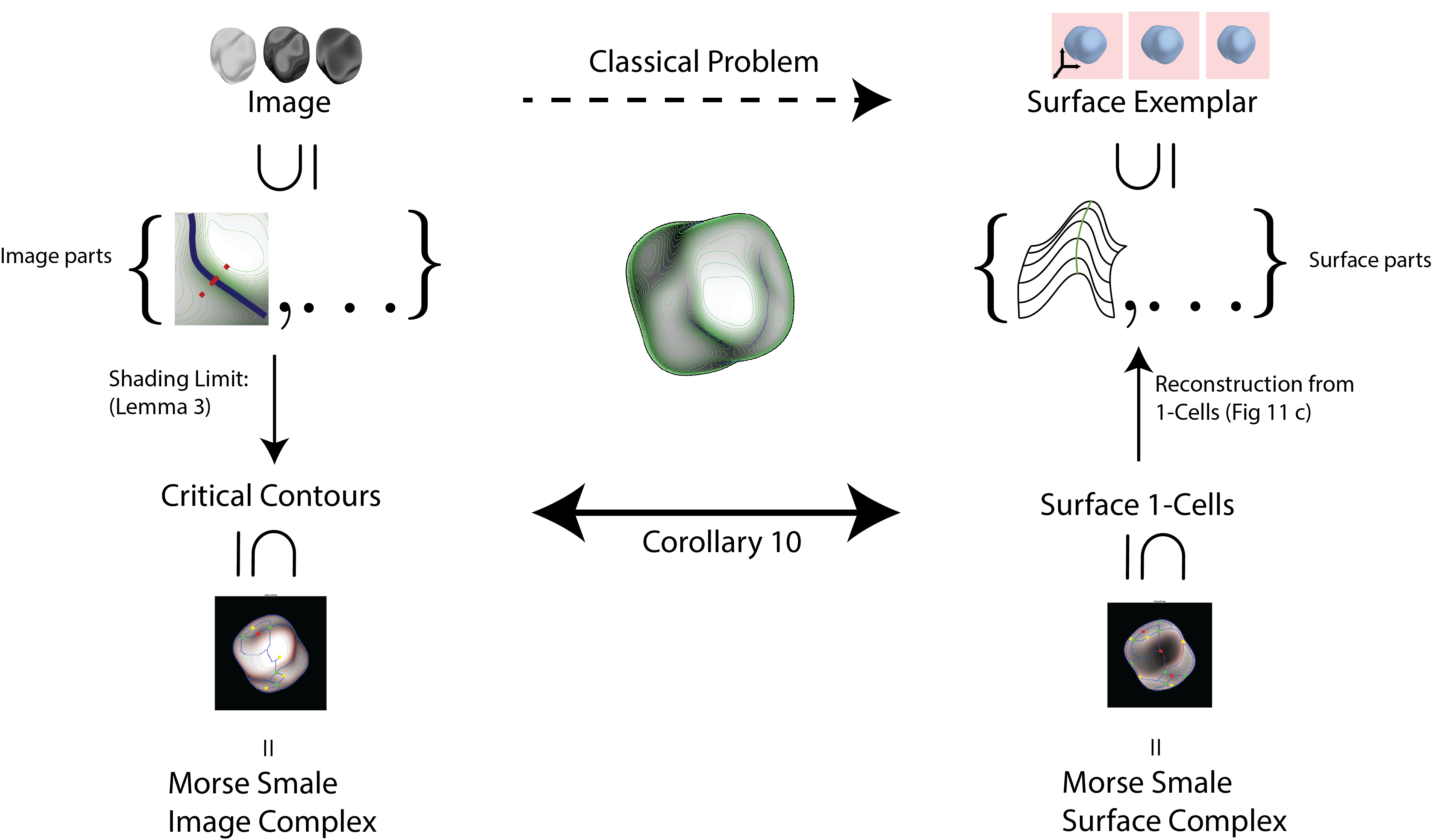}
\caption{Overview of our approach, starting in the upper left corner. Suppose we are given an image of a surface created via an unknown rendering function.  Classical shape-from-shading methods attempt to infer, from this pixel representation, a unique surface. We follow the vertical path  and identify those image features that will correspond to surface features. These are the critical contours that delimit ``image parts.'' Critical contours are invariant to the (unknown) rendering function. We show specifically how shading ``concentrates'' into such contours; we call this the shading-contour limit.  This allows us to move to ``critical contours,'' defined in section $7$, and  Corollary {\rm \ref{cor:slant}} allows us to interpret these critical contours as surface curves with important properties. We arrive at ``surface $1$-cells,'' which correspond to qualitative parts of the surface (bumps, valleys, ridges, etc.); these can be completed back into a scalar field such as surface slant, which leads, finally, to a surface exemplar in the upper right.
}
\label{fig:overview}
\end{center}
\end{figure}

An overview of our argument is as follows (Figure~\ref{fig:overview}). Starting in the upper left corner, we are given an image of a surface created via an unknown rendering function. It might be a shaded image,  a line drawing, etc.  Classical shape-from-shading methods compute directly from this pixel representation and must confront this ill-posed problem (dotted right arrow).  Instead, we proceed to first identify image features (that will correspond to surface features) that are invariant to the (unknown) rendering function; this is shown as ``image parts.''  For shading functions, these image parts are abstracted to stylized lines in section 5; the relationship is summarized in Lemma \ref{lemma:shading_derivs}.   This allows us to move to ``critical contours,'' defined in section 7.  Then, Corollary \ref{cor:slant} allows us to interpret these critical contours as surface curves with important properties (1-cells of the slant MS complex), so that  we arrive at ``surface 1-cells.''  These surface 1-cells correspond to boundaries of qualitative parts of the surface (bumps, valleys, ridges, etc.) and function as a kind of scaffold on which the surface can be ``built.'' Various inpainting or diffusion algorithms could complete this scaffold of 1-cells back into a scalar field; for an example, see Figure \ref{fig:recon_1_cells}.  The completion is not unique, which relates directly to perception; the scaffold is the qualitative invariant on which different subjects build their quantitative percept. Thus we arrive at a surface exemplar in the upper right.

The qualitative nature of the solution we are proposing for shape perception bears some resemblance to the categorical parts and necks that can be inferred from the medial axis (or skeleton) \cite{kimia1995shapes, hung2012medial}, but our scheme focuses on interior lines and shading distributions rather than bounding contours. Importantly, just as the medial axis can be used to structure grasping of objects \cite{przybylski2011planning}, our critical contours may suffice for this as well. When reaching to grasp an object, we preform our hands to reflect the pose and extension of the handle \cite{jeannerod1995grasping, paulignan1997influence}.  For both people and robots, then, qualitative properties seem to suffice. In effect, we get global constraints by integrating local conditions on contours; evidence is beginning to accumulate that such segmentations induce psychophysical limits \cite{Koenderink15}.

\section{Image formation and assumptions}
\label{sec:image_formation}
We now describe the image formation process.  Consider an orthogonal projection model and define the 3D coordinate axis $(x, y, z)$ so that $(x, y)$ parametrizes the image plane and $z$ is the view direction.  Let $e_1, e_2, e_3$ represent the standard basis (unit vectors in these cardinal directions).  We think of the image $I(x, y)$ as being created from a ``cue'' or, more precisely, by applying a rendering function $F$ to the normal field $N(x, y) = (n_1, n_2, n_3)$ of a smooth surface $S(x, y)$.  That is,
\begin{align}
& F: \mathbb{S}^2 \rightarrow \mathbb{R}, \\
& I(x, y) = F(N(x, y)). \label{eqn:rendering_func}
\end{align}
Many familiar cues have this structure.  For example, Lambertian shading is equivalent to $F_L (N(x, y)) = \sum_i L_i \cdot N(x, y)$ for diffuse light sources $L_i$.  The spatial frequency cue of isotropic texture (once ideally blurred) is monotonically related to $F_T (N(x, y)) = e_3 \cdot N(x, y)$.  Specular shading is equivalent to $F_S (N(x, y)) = \sum_i \left[ (L_i - 2 (N \cdot L_i) N) \cdot e_3 \right]^p$ for specular light sources $L_i$ and some constant $p$.  We seek image contours that are always present independent of the choice of $F$.

For theoretical analysis, we consider $C^2$ rendering functions $F$ so that gradient fields are Lipschitz continuous.  The differential of  (\ref{eqn:rendering_func}) yields
\begin{align}
\label{eqn:differential}
DI(x, y) & = DF_{N(x, y)} \circ \bm{DN}(x, y) \\
& = DF^T \bm{DN}(x, y) \label{eqn:differential2} \\
& = \begin{bmatrix} F_x & F_y & F_z \end{bmatrix}  \begin{bmatrix}  n_{1, x}  & n_{1, y} \\  n_{2, x} &  n_{2, y} \\ n_{3, x}  & n_{3, y} \end{bmatrix}.
\end{align}

\noindent
To go from \eqref{eqn:differential} to \eqref{eqn:differential2}, note that function composition here is matrix multiplication.  To simplify notation, we also drop the point of application of $DF$.   (A comment on notation: we will use bold lettering to denote tensors and matrices, while vectors will be in standard lettering.  For an introduction to this tensor notation, see the appendix in \cite{tensors-arxiv-1}.)

Here, $DI$ is the 1-form corresponding to dot product with the gradient $\nabla I$. $DF$ is a $1 \times 3$ vector  and $\bm{DN}$ is a $3 \times 2$ matrix.   The image gradient orientation, that is, the angle of the vector $\nabla I$, is generally dependent on both the surface through the operator $\bm{DN}$ and the material/cue through the operator $DF$.   In the most general scenario, where $DF$ has no constraint, we cannot constrain $\bm{DN}$ from the data $DI$.  Thus, we will now put some weak limits on $DF$, $\bm{DN}$, and $\bm{D^2 N}$.

\subsection{Rendering function assumptions}
\begin{definition}
\label{rendering_funct_assum}
The admissible cue class is the set of differentiable rendering functions $F$ satisfying the following two criteria:
\begin{enumerate}\leftskip38pt
\item[{\rm (1)}] Bounded variation: there exists a $C_1$ such that for all rendering functions $F$ and $N_0 \in S^2,  || \nabla F || < C_1$.
\item[{\rm (2)}] $F$ is concave and $\bm{D^2 F}$ is bounded: There exists a constant $C_2$ s.t. $C_2 < \bm{D^2 F}_{N_0} (u, u)\break \leq 0$ for all $N_0 \in S^2, u \in T_{N_0} S^2$.
\end{enumerate}
\end{definition}

We elaborate on the conditions.  The bounded variation condition ensures that arbitrarily large changes in the image cannot be due to the rendering function alone but must also require some change in the normal field.  Without a constraint on the rendering function such as this one, we could not decipher between gradients due to material changes (such as a painting) and gradients due to natural shading changes.  There is perceptual evidence supporting this condition.  If an image feature is to be seen as ``shading,'' it must have generally low contrast.  Very high contrast features are often seen as material changes \cite{DanVSS13}.  The concave condition ensures that
if the unit sphere were imaged with a rendering function, we would see only one highlight (point of maximum brightness).
Note that this condition also relates to ``cloudy day'' \cite{langer1994shape, Langer99perceptionof} illumination, where the aperture function plays the role of, or is aided by, the surface normal; this also eliminates  \cite{horn1993impossible}.
Although our rendering function is quite general, it is designed to facilitate our analysis; some light field effects may not be included \cite{langer1997light, koenderink2003visual}.

Thus, a given rendering function $F$ creates an image $I(x, y)$ of a given imaged surface $S$.   Suppose we now choose a new rendering function $\tilde{F}$ (e.g., by changing the light source) to get a new image $\tilde{I}(x, y)$ of the same surface.  Our main theorem describes an important commonality between these images.  A registration correspondence exists between these two images, $I(x, y)$ and $\tilde{I}(\tilde{x}, \tilde{y})$, but we will not focus on describing or calculating it here.  Instead, to simplify analysis, we will regard the second image as a new scalar field on the same coordinate system: we will prove things regarding $\tilde{I}(x, y)$.

We now restrict to \emph{generic} interactions between the rendering function and imaged surface.

\subsection{Generic surface assumptions}
An image of a surface rarely completely describes the surface.  Since shape reconstruction is generally ill-posed, surfaces can collude with rendering functions to create images that hide surface features.  We seek to remove these rare cases via assumptions here.
        For example, in Lambertian shading, the image is a projection of the surface normal field onto an unknown direction (given by the light source), so variation in the normal field in directions perpendicular to the light source will have no effect on a local image patch: e.g., a Lambertian right cylinder cone with a light source directly above would create a constant intensity image.  In this case, $| D N|$ can be arbitrarily large while $| D I |$ is 0.  We wish to avoid cases like these.

        A slight change of light source or viewpoint would make the curvature of the cone visible and would therefore lead to large changes in the image.  Thus, we use the term ``generic'' to represent assumptions that remove rare or unstable configurations.  These unstable configurations often vanish with a small perturbation of scene parameters, i.e., light source or viewpoint changes.  There are two forms of generic that we will assume for our setup:
 \begin{enumerate}
 \item Given a curve $\alpha(t)$ on a surface $S$, we assume that the three column vectors of the unfolded tensor $D^2_{\alpha(t)} N$,
 \[
 \{D^2N (u, u),  D^2 N (u, w), D^2 N (w, w)\} \in \mathbb{R}^3,
 \]
 contain at least two that are linearly independent.
 \item Let $\theta(v_1, v_2)$ represent the acute angle between two vectors in $\mathbb{R}^3$.  Given a curve $\alpha(t)$ on a surface $S$ and rendering function $F$, we assume that there exists an $\epsilon$ such that for all $t$,
 \begin{align}
 \theta(DF, DN_{\alpha(t)} (\cdot) )& > \epsilon, \\
  \theta(DF, D^2 N_{\alpha(t)} (u, u)) & > \epsilon, \\
  \theta(DF, D^2 N_{\alpha(t)} (w, w)) & > \epsilon, \label{gen_assumptions} \\
 || D F || & > \epsilon. \
 \end{align}
 That is, the rendering function's differential does not happen to align along certain differential properties of the surface normal field.  Many of these properties are arbitrarily small measure in the space of continuous configurations.  (This removes the Lambertian cone example.)  Experimentally, violations of these conditions are rare.

 \end{enumerate}

Of course, there are other obstacles to 3D shape reconstruction that we are ignoring, such as multiple scattering, partial occlusions, or textured objects.  We are instead focused on understanding the stability and geometric meaning of image contours derived from shading.  We now define this relationship.

\section{The contour interpreted as a shading limit}
 \label{sec:contour_is_shading}
\begin{figure}[!b]
\begin{center}
a) \includegraphics[trim= 0cm 0cm 0cm 0cm, clip=true, width = 0.33 \linewidth]{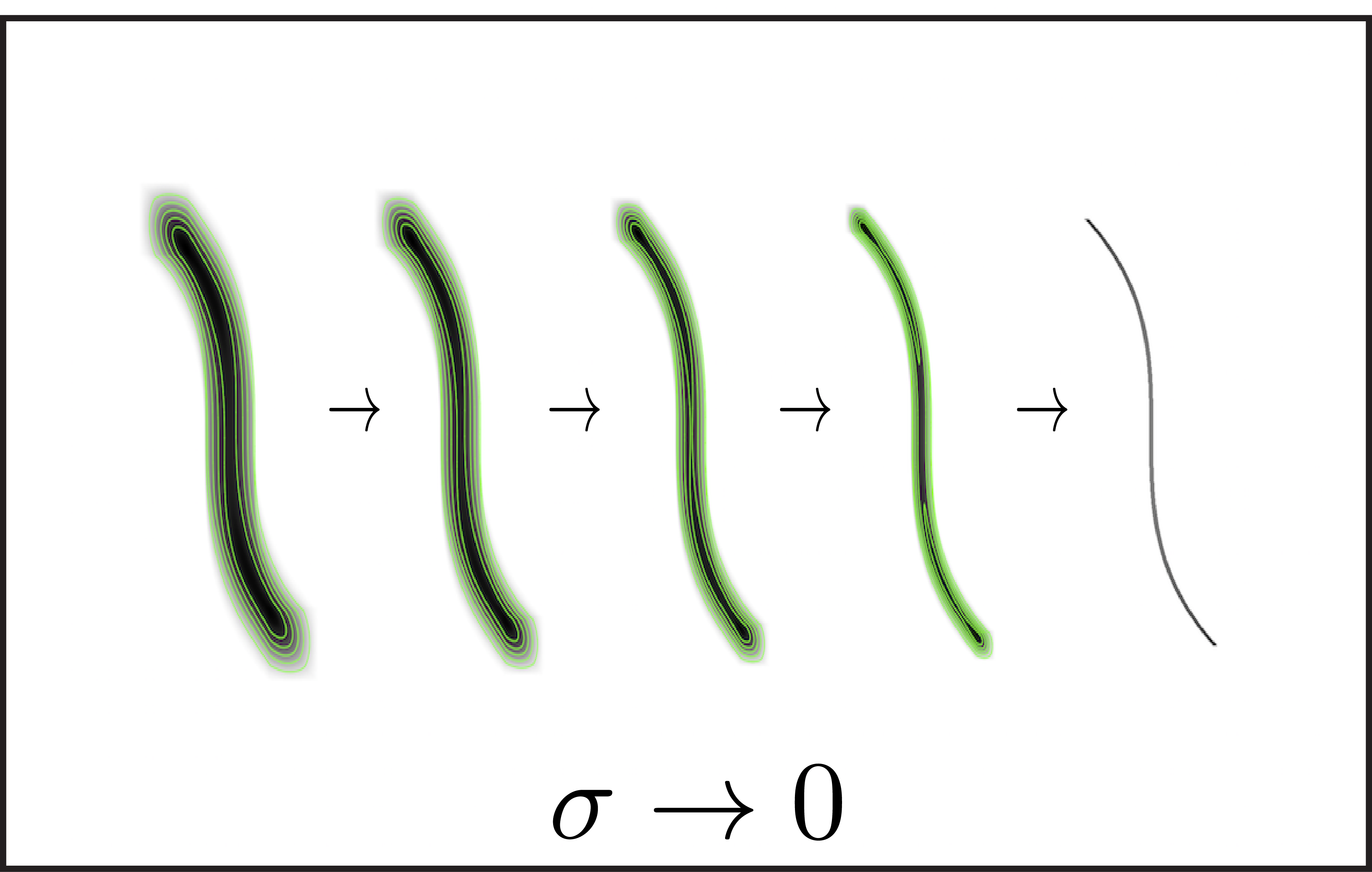} 
b) \includegraphics[trim= 2cm 1cm 2cm 1cm, clip=true, width = 0.18 \linewidth]{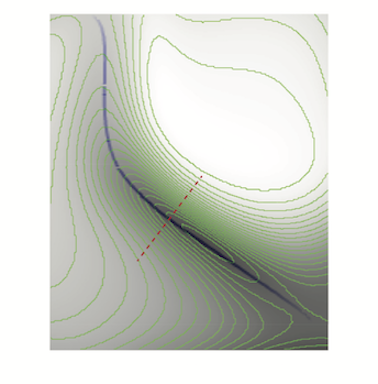}
c) \includegraphics[trim= 1cm 1cm 1cm 1cm, clip=true, width = 0.23 \linewidth]{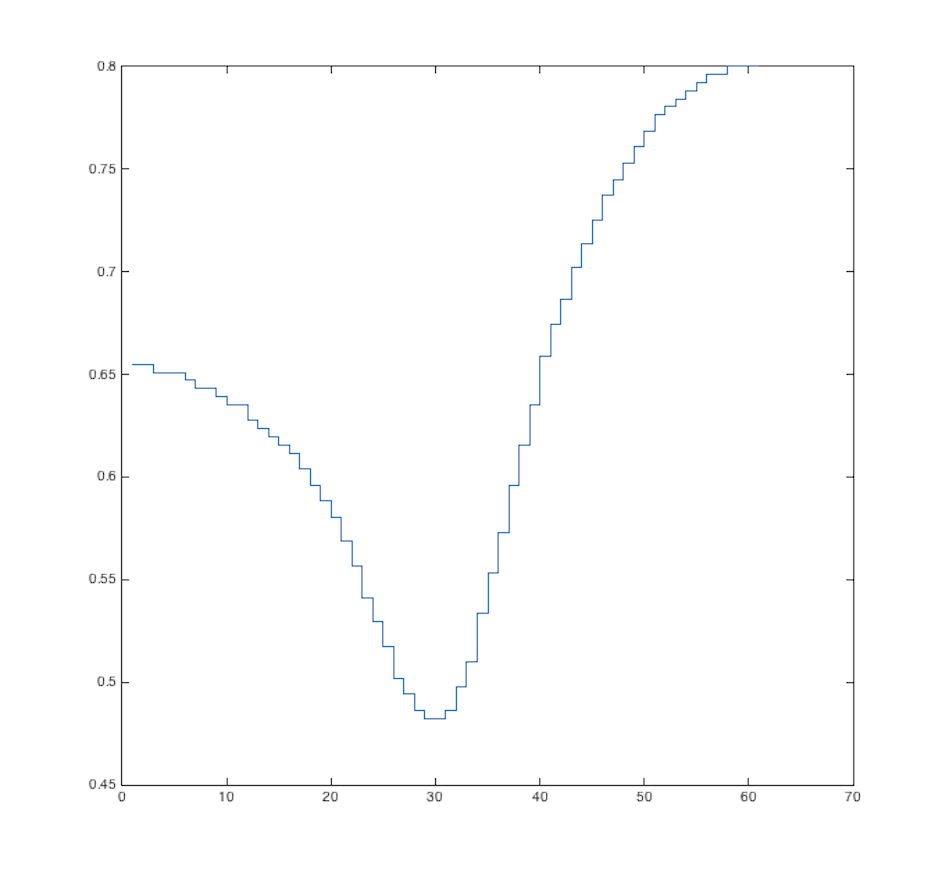}
\caption{We inherit the meaning of a contour by considering it as a limit of shaded images. {\rm (a)} From left to right, we start with a blurred version of the lower left contour from Figure {\rm \ref{fig:blob}}, drawn in blue.  Isophotes are in green.  As we remove the blurring, we represent an element of the shading sequence $I^\sigma$ of the contour with successively smaller $\sigma$. {\rm (b)} We show isophotes from a Lambertian shaded image of the surface.  Note the similarity of the two isophote patterns near the contour: on either side of the contour the direction of the level curves rotates to be nearly tangential to the contour.  Also, note how the contour (blue) is nearly a gradient flow of the shaded image.  The dotted red line represents a transversal direction, with plotted pixel values in {\rm (c)}.  Note the steep local shading minimum across the contour; this relates to the definition of height ridge found in~{\rm \cite{lindeberg1998edge}}.}
\label{fig:transversal_example}
\end{center}
\end{figure}

We seek a visual pattern that is present across many views and many renderings.   We are inspired by artist sketches, where a collection of thin strokes on paper inspires a surface perception.  Eventually we will think of these strokes as a robust skeleton for describing part boundaries implied by a shaded image.  To understand what is the physical meaning (constraint on the viewed surface) inherent in each stroke, we start by investigating their differential properties.

Consider a line drawing image as a collection of 1D contours.  Focusing on one contour, $\alpha(t)$, assume it has bounded image (planar) curvature.  For each point $\alpha (t)$, we have a scalar intensity value $I(\alpha (t)) > 0$; we require this value to be 0 at the endpoints.  Without loss of generality, let $\alpha(t)$ be arc-length parametrized.

\begin{definition}
  An ideal {\rm 1}D contour $\alpha(t)$, $0 \leq t \leq 1$, can be expressed as a scalar field $I_{\alpha} \subset \mathbb{R}^2$ in the following way:
\begin{align}
\label{ideal_contour}
I_{\alpha} (x,y)  & = I(\alpha(t)) \quad \forall \, t \in (0, 1) \, s.t. \, \alpha(t) =  (x, y),\\
I_{\alpha} (x,y) & =        0 \quad \quad \quad \text{    otherwise}. \
\end{align}
\end{definition}
%

We wish to understand the behavior of the image derivatives of $I_\alpha$ but, since $I_\alpha$ is discontinuous, the derivatives do not exist.  However, we can approximate these derivatives by considering $I_\alpha$ as the limit of a sequence of shaded images $\{ I^\sigma \}$ as $\sigma \rightarrow 0$ on a tubular neighborhood $\Omega_\alpha$.  We define each shaded image $ I^\sigma$ as a convolution with Gaussian functions $G(\sigma)$ of $I_\alpha$ with successively smaller standard deviation $\sigma$ in the following manner.

For every point $p$ on $\alpha(t)$, we parametrize the local neighborhood $U_\alpha(t)$ with two directions.  For convenience, write $\bm{u}(t) = \alpha'(t)$.  Define $\bm{w}(t)$ to be the transversal direction at the point $p$, so $\bm{w}(t) \cdot \bm{u}(t)= 0$. $\{ \bm{u}, \bm{w} \}$ is an orthonormal basis. Let $\xi, \eta$ be the corresponding coordinate functions.  As we are only interested in the limiting behavior and as $\alpha(t)$ has bounded curvature, we realign the frame so that $p$ is at the origin and define
\begin{align}
\label{eqn:shading_limit}
I^\sigma (p + \xi \bm{u} + \eta \bm{w}) & = \frac{1}{\sqrt{2 \pi} \sigma} \int I_\alpha (p + x \bm{u}) e^{-\frac{\eta^2}{2 \sigma^2}}    e^{-\frac{(x - \xi)^2}{2 \sigma^2}} \, dx \
\end{align}

We can now calculate image derivatives of $I^{\sigma}$ in the $\bm{u}, \bm{w}$ directions (see the appendix) with the results summarized below and illustrated in Figure~\ref{fig:transversal_example}.

\begin{lemma}
\label{lemma:shading_derivs}
Let $\alpha(t), I_\alpha, \bm{u}, \bm{w}$ be defined as above.  The sequence of shaded images $\{ I^\sigma \}$ converge pointwise to the original line drawing $I_\alpha$ and have the following properties on the derivatives  as $\sigma \rightarrow 0$:
\label{conditions}
\begin{enumerate}\leftskip38pt
\item[{\rm (1)}] $I^\sigma_{ww} (\alpha(t)) \rightarrow -\infty$ for every $t \in [0, 1]$.
\item[{\rm (2)}] $I^\sigma_{uu} (\alpha(t)) \rightarrow \infty$ for $t = \{0, 1\}$.
\item[{\rm (3)}] There exists a constant $M$ such that  $| I^\sigma_{w} |, |I^\sigma_{u}|, |I^\sigma_{uw}| < M$ for every $t \in [0, 1]$.
\end{enumerate}
\end{lemma}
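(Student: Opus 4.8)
The plan is to exploit the product structure that the smoothing \eqref{eqn:shading_limit} imposes. Because $I_\alpha$ is, to leading order in $\sigma$, supported on the $\bm{u}$-axis through $p$, the two Gaussian factors decouple, and I would write
\[
I^\sigma(\xi,\eta) \;=\; \phi(\eta)\,h_\sigma(\xi), \qquad \phi(\eta)=e^{-\eta^2/2\sigma^2},\quad h_\sigma = g * G_\sigma,
\]
where $g(x)=I(\alpha(t_0+x))$ is the intensity profile read off along the contour, $G_\sigma$ is the normalized one-dimensional Gaussian, and the bounded planar curvature of $\alpha$ guarantees that replacing the true contour by its tangent line at $p$ contributes only lower-order errors as $\sigma\to0$. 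Pointwise convergence $I^\sigma\to I_\alpha$ is then immediate: on the contour $\eta=0$ and $h_\sigma(0)\to g(0)$ by standard mollifier convergence, while off the contour the prefactor $\phi(\eta)\to0$ exponentially.

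Next I would differentiate this product and evaluate on the contour, where $\eta=0$ and hence $\phi(0)=1$, $\phi'(0)=0$, $\phi''(0)=-\sigma^{-2}$. The product rule gives, on $\alpha(t)$,
\[
I^\sigma_w = \phi'(0)h_\sigma = 0,\quad I^\sigma_{uw}=\phi'(0)h_\sigma'=0,\quad I^\sigma_u=h_\sigma',\quad I^\sigma_{uu}=h_\sigma'',\quad I^\sigma_{ww}=-\tfrac{1}{\sigma^2}h_\sigma.
\]
Two of the three quantities in claim (3) are therefore identically zero, and the third, $I^\sigma_u=h_\sigma'=g'*G_\sigma$, is bounded by $\lVert g'\rVert_\infty$ uniformly in $t$ and $\sigma$ since $G_\sigma$ is a probability density; this settles (3). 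For an interior point $t\in(0,1)$ we have $g(0)=I(\alpha(t))>0$, so $h_\sigma(0)\to I(\alpha(t))>0$ and $I^\sigma_{ww}=-\sigma^{-2}h_\sigma(0)\to-\infty$, which is claim (1) away from the endpoints. Note the factorization also explains why claim (2) is an endpoint phenomenon: at interior points $h_\sigma''\to g''$ remains bounded.

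The delicate part, and the step I expect to be the main obstacle, is the endpoint behavior underlying claim (2) and the endpoint cases of claim (1). At $t\in\{0,1\}$ the profile $g$ is continuous but one-sided: to leading order it is a ramp $g(x)\approx c\,x^{+}$ whose slope $c$ equals the one-sided derivative of $t\mapsto I(\alpha(t))$ at the endpoint, which a transversality (genericity) assumption forces to be nonzero. Here I would argue distributionally: $g''=c\,\delta$, so $h_\sigma''=g''*G_\sigma=c\,G_\sigma$ and
\[
I^\sigma_{uu}(\alpha(0))=h_\sigma''(0)=c\,G_\sigma(0)=\frac{c}{\sqrt{2\pi}\,\sigma}\longrightarrow +\infty,
\]
giving claim (2). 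The same ramp yields $h_\sigma(0)=\int_0^\infty g\,G_\sigma \sim c\sigma/\sqrt{2\pi}$, so $I^\sigma_{ww}(\alpha(0))=-\sigma^{-2}h_\sigma(0)\sim -c/(\sqrt{2\pi}\,\sigma)\to-\infty$, extending claim (1) to the endpoints.

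The obstacle is genuinely in this endpoint accounting: one must justify that the leading local model of $g$ (a ramp of nonzero slope) is legitimate, that the corrections from the higher Taylor coefficients of $I\circ\alpha$ and from the curvature-induced bending of the support are of strictly lower order in $\sigma^{-1}$, and that the bound in (3) stays uniform across $t\in[0,1]$ despite the jump in $g'$ at the endpoints. I would control the uniformity by noting that convolution against the probability density $G_\sigma$ is a contraction on $L^\infty$, and the lower-order terms by Taylor-expanding $g$ inside the integral, isolating the divergent $\sigma^{-1}$ contribution from an $O(1)$ (or $O(\sigma)$) remainder.
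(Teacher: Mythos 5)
Your proposal is correct and follows essentially the same route as the paper: the factorization $I^\sigma(\xi,\eta) = (I_\alpha * G_\sigma)(\xi)\, e^{-\eta^2/2\sigma^2}$ is exactly the paper's starting identity, interior derivatives are obtained by the same mollifier/product-rule computation, and your distributional endpoint model ($g'' = c\,\delta$ convolved with $G_\sigma$) is equivalent to the paper's Heaviside-times-Taylor-expansion integrals, which yield the same $c_1/(\sqrt{2\pi}\,\sigma)$ divergence. If anything, you are slightly more careful than the paper in flagging that claims (1) and (2) at the endpoints require the one-sided slope $c_1$ of $I\circ\alpha$ to be strictly positive, a condition the paper leaves implicit.
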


The first and most important condition implies that the contour sides ``pinch in'' as $\sigma \rightarrow 0$.  Thus an ``ideal contour'' can be seen as pointwise close to a shading pattern with the above derivatives.  This leads us to define \emph{critical contours} in the next section, which are nearly invariant shading patterns that mimic these artist's strokes; afterward we will connect  such critical contours to MS complexes.







\section{Critical contours}
We now define a \emph{critical contour}, the visual pattern that is (nearly) invariant across the admissible rendering class defined above.  This critical contour will have image derivatives similar to those calculated in Lemma \ref{lemma:shading_derivs} for the ideal contour.

\begin{definition}
\label{defn_cc}
A $K$-critical contour $(\alpha(t), I(x, y), M, K)$ is a curve $\alpha(t)$ on an image $I(x, y)$ such that the following conditions hold for all $t$:
\begin{enumerate}\leftskip38pt
\item[{\rm (1)}] $| I_{ww} (\alpha(t))| > K$ for every $t \in [0, 1]$,
\item[{\rm (2)}] $| I_{uu} (\alpha(t))|  > K $ for $t = \{0, 1\}$,
\item[{\rm (3)}] $| DI |, |I_{uw}| < M$ for every $t \in [0, 1]$,
\end{enumerate}
\noindent for positive $M, K \in \mathbb{R}$.
\end{definition}

For the remainder of the paper, let $\alpha(t)$ denote a $K$-critical contour with the conditions from Definition \ref{defn_cc}.
As $K \rightarrow \infty$, \textit{K}-critical contours converge pointwise to the ideal contour defined in (\ref{ideal_contour}).
In Theorem \ref{main_theorem}, we show these \textit{K}-critical contours are also 1-cells of the MS complex in any image obtained from a rendering function in our admissible class. In general, \textit{K}-critical contours are ``stronger'' 1-cells that persist if the rendering function is changed.   Note the $|I_{ww}| > K$ condition above is stronger than the usual condition for intensity to be at a transversal maximum for differential geometric ridges \cite{haralick1983ridges}.

\begin{theorem}$1.$
\label{main_theorem}
Let $F, \tilde{F}$ be any two rendering functions in the admissible cue class.  Applying these rendering functions to a generic surface $S$, we obtain two corresponding images $I(x, y), \tilde{I}(x, y)$.  For any $\epsilon > 0$, there exists a $K \in \mathbb{R}$ such that the surface region corresponding to an
$\epsilon$-neighborhood of  a $K$-critical contour in $I$ contains an MS $1$-cell for image $\tilde{I}$.
\end{theorem}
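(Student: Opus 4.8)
\emph{Proof proposal.} The plan is to descend from the image $I$ to the common surface $S$, ascend back up to the second image $\tilde I$, and only then convert the resulting steepness estimates for $\tilde I$ into a separatrix of its MS complex. Everything rests on differentiating the composition $I = F\circ N$ of \eqref{eqn:rendering_func} a second time: writing $N_v = \bm{DN}(v)$ and differentiating \eqref{eqn:differential2} once more gives, for any unit direction $v$,
\[
I_{vv} = \bm{D^2 F}(N_v, N_v) + \nabla F \cdot \bm{D^2 N}(v, v),
\]
where, by the admissible-class conditions of Definition \ref{rendering_funct_assum} (concavity and boundedness of $\bm{D^2 F}$) together with the fixed smoothness of $S$, the first term satisfies $|\bm{D^2 F}(N_v, N_v)| \le |C_2|\,\|N_v\|^2 \le C_3$ with $C_3$ uniform over the class. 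First I would translate the $K$-critical contour hypothesis into a statement about $S$: taking $v=\bm w$, the bound $|I_{ww}| > K$ from Definition \ref{defn_cc} and $\|\nabla F\| < C_1$ give, by Cauchy--Schwarz, $\|\bm{D^2 N}(\bm w,\bm w)\| \ge (|I_{ww}|-C_3)/C_1 > (K-C_3)/C_1$, so the normal field has a transversal second-derivative concentration that grows with $K$ (the ``fold'' recorded by the contour). Meanwhile $|DI|,|I_{uw}| < M$ and the regularity of $S$ keep $N_u,N_w$ and the mixed term $\bm{D^2 N}(\bm u,\bm w)$ bounded along $\alpha$ by a constant independent of $K$ (sharp across, smooth along), and the endpoint condition $|I_{uu}|>K$ forces $\|\bm{D^2 N}(\bm u,\bm u)\|$ large at $t\in\{0,1\}$.

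Next I would feed this surface information into $\tilde I = \tilde F\circ N$. The same identity yields $\tilde I_{ww} = \bm{D^2\tilde F}(N_w,N_w) + \nabla\tilde F\cdot \bm{D^2 N}(\bm w,\bm w)$, the first term again bounded by $C_3$. The generic surface assumptions prevent $\nabla\tilde F$ from being orthogonal to the dominant curvature vector $\bm{D^2 N}(\bm w,\bm w)$, so its $\mathbb{R}^3$ projection retains a fixed fraction of the large magnitude just estimated; hence $|\tilde I_{ww}| > K'$ with a definite sign, where $K'=K'(K)\to\infty$ as $K\to\infty$. Simultaneously $\|\nabla\tilde I\|\le C_1 B$ and $|\tilde I_{uw}|\le C_3 + C_1 B$ stay bounded, and the endpoint estimate propagates to $|\tilde I_{uu}|>K'$ at $t\in\{0,1\}$. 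Thus $\tilde I$ obeys the quantitative conditions of a $K'$-critical contour (Definition \ref{defn_cc}) on the same region, mirroring Lemma \ref{lemma:shading_derivs}.

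Finally I would upgrade these analytic conditions to an MS $1$-cell. Since $|\tilde I_w|\le\|\nabla\tilde I\|$ is bounded while $|\tilde I_{ww}|>K'$, along each transversal the function $\eta\mapsto\tilde I_w$ vanishes within distance $O(\|\nabla\tilde I\|/K')$ of $\alpha$; because $\tilde I_{ww}\neq 0$, the implicit function theorem produces a smooth ridge curve $\tilde\alpha$ with $\tilde I_w=0$, $\tilde I_{ww}<0$, lying inside the $\epsilon$-neighborhood once $K$ is large. On $\tilde\alpha$ one has $\nabla\tilde I = \tilde I_u\,\bm u$, and the tangent to $\tilde\alpha$ differs from $\bm u$ by $O(|\tilde I_{uw}|/|\tilde I_{ww}|)\to 0$, so $\tilde\alpha$ is (in the limit) an integral line of $\nabla\tilde I$. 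The endpoint conditions give, via the same transversal solve and the intermediate value theorem (using that $\tilde I_u$ points inward from both ends), genuine critical points of $\tilde I$: the two ends have $\tilde I_{ww}<0,\ \tilde I_{uu}>0$ and are therefore saddles, while a sign change of $\tilde I_u$ in the interior yields a maximum. Assuming, as we may by genericity, that $\tilde I$ is Morse--Smale, the ascending separatrix from an endpoint saddle to this interior maximum is a bona fide $1$-cell of the MS complex of $\tilde I$ contained in the $\epsilon$-neighborhood, which is the claim.

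The main obstacle is precisely this last step: converting purely local, quantitative steepness bounds on $\tilde I$ into the existence of the \emph{global} separatrix together with its two endpoint critical points, and guaranteeing that the whole $1$-cell (not just its crest locus) remains in the $\epsilon$-neighborhood. Care is needed that the constants stay uniform so that $K'(K)\to\infty$, that a possibly large mixed term $\bm{D^2 N}(\bm u,\bm w)$ not tilt the ridge away from $\bm u$, and that the transversality required to call $\tilde\alpha$ an MS $1$-cell genuinely holds for $\tilde I$.
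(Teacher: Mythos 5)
Your first two steps track the paper's own argument closely: transferring the $K$-critical-contour bounds on $I$ through $\bm{D^2 N}$ to the second image is exactly the content of the paper's Lemma \ref{lemma1} and the Taylor-expansion portions of Lemmas \ref{lemma:crit_points} and \ref{lemma:beta_curves}, and your IFT/IVT production of near-endpoint critical points of $\tilde I$ is a workable variant of Lemma \ref{lemma:crit_points}. The genuine gap is in your final step, and it is twofold. First, your sign assertions are unjustified: Definition \ref{defn_cc} and the transferred estimates control only $|\tilde I_{uu}|$ and $|\tilde I_{ww}|$, so you may not conclude that the endpoint critical points have $\tilde I_{uu}>0$ (hence are saddles), that $\tilde I_u$ points inward from both ends, or that an interior zero of $\tilde I_u$ along the ridge---whose very existence depends on those signs---is a maximum rather than another saddle or a degenerate point. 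Indeed, the configuration the paper actually produces has no interior maximum at all: generically the flow along the tube is monotone, one endpoint critical point is an extremum, the other is a saddle, and the $1$-cell is the separatrix traversing the whole tube from one to the other. An argument premised on ``saddle at each end plus interior maximum'' therefore fails on precisely the generic case it is meant to cover.

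Second, you flag but do not close the global step, and that step is the theorem. An approximate integral line (your IFT ridge $\tilde\alpha$) is not an MS $1$-cell, and a separatrix emanating from an endpoint saddle could a priori exit the $\epsilon$-tube laterally or terminate at a distant critical point. This is exactly what the paper's Lemma \ref{lemma:beta_curves} (the transversal component of $\nabla\tilde I$ has a fixed sign on each of the two side curves $\beta_1,\beta_2$, so integral curves can cross them in only one direction) combined with Lemma \ref{lemma:flow_dir} (by connectedness of the tube and continuous dependence of integral curves on initial conditions for a Lipschitz gradient field, all integral paths intersecting the tube traverse it from one end to the other) supplies: the resulting ``box'' structure traps an integral path, forces one of the two endpoint critical points to be a saddle, and yields the traversing $1$-cell with no appeal to an interior maximum or to any sign information beyond the box. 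To repair your proof you would need to replace ``genericity lets me assume $\tilde I$ is Morse--Smale and the separatrix lands on the interior maximum'' with such a trapping argument; as written, the key existence-and-confinement claim is asserted rather than proven.
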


To gain intuition for Theorem \ref{main_theorem}, consider the surface $f(x, y) = a x^2 + b y^2 + c$. Note that $(0, 0)$ is a critical point of $f$, and think of $f$ as a height function above a plane with normal vector $(0, 0, 1)$.  Now, define $g$ to be another height function from the surface defined by $f$ to a different plane with normal vector $(g_1, g_2, g_3)$.   In general, $(0, 0)$ is not a critical point for $g$.   However, if $|a|$ and $|b|$ is large enough, then $g$ will have a critical point arbitrarily close to $(0, 0)$.  Thus, $f$ and $g$ almost ``share'' critical points.
\begin{figure}[t]
\begin{center}
\includegraphics[trim= 5cm 0cm 5cm 0cm, clip=true, width = 0.4 \linewidth]{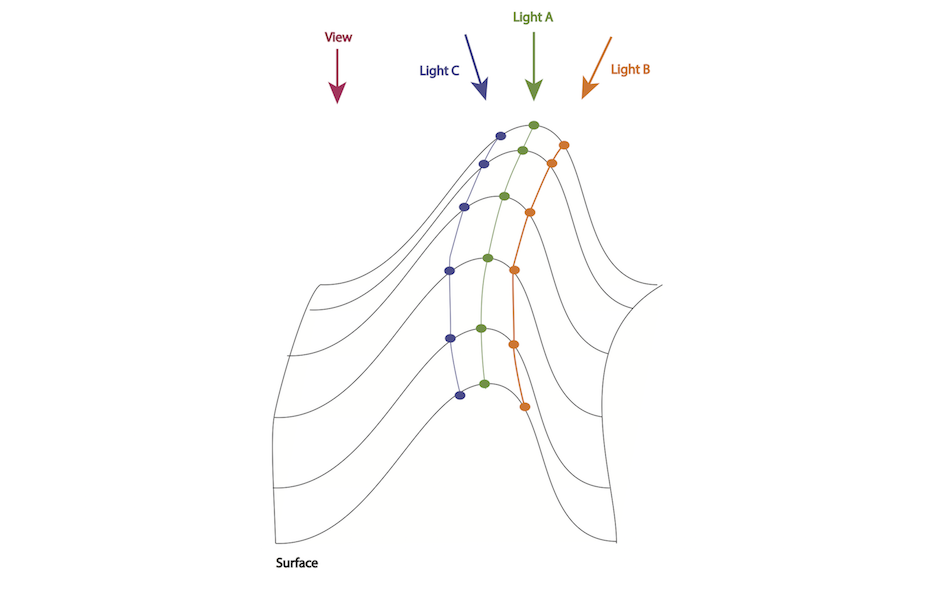}
\caption{In a surface region with anisotropic curvature (the two principal curvatures differ vastly), the image gradient flows are robust as we change the light source.  Each curve represents an MS $1$-cell (critical contour) on the image corresponding to the light source with the same color.  As we move the light source from A to B, the integral path shifts a small amount.}
\label{fig:shifting_ls}
\end{center}
\end{figure}


\begin{figure}[t]\vspace*{-12pt}
\begin{center}
a) \includegraphics[trim= 2cm 1cm 2cm 0cm, clip=true, width = 0.3 \linewidth]{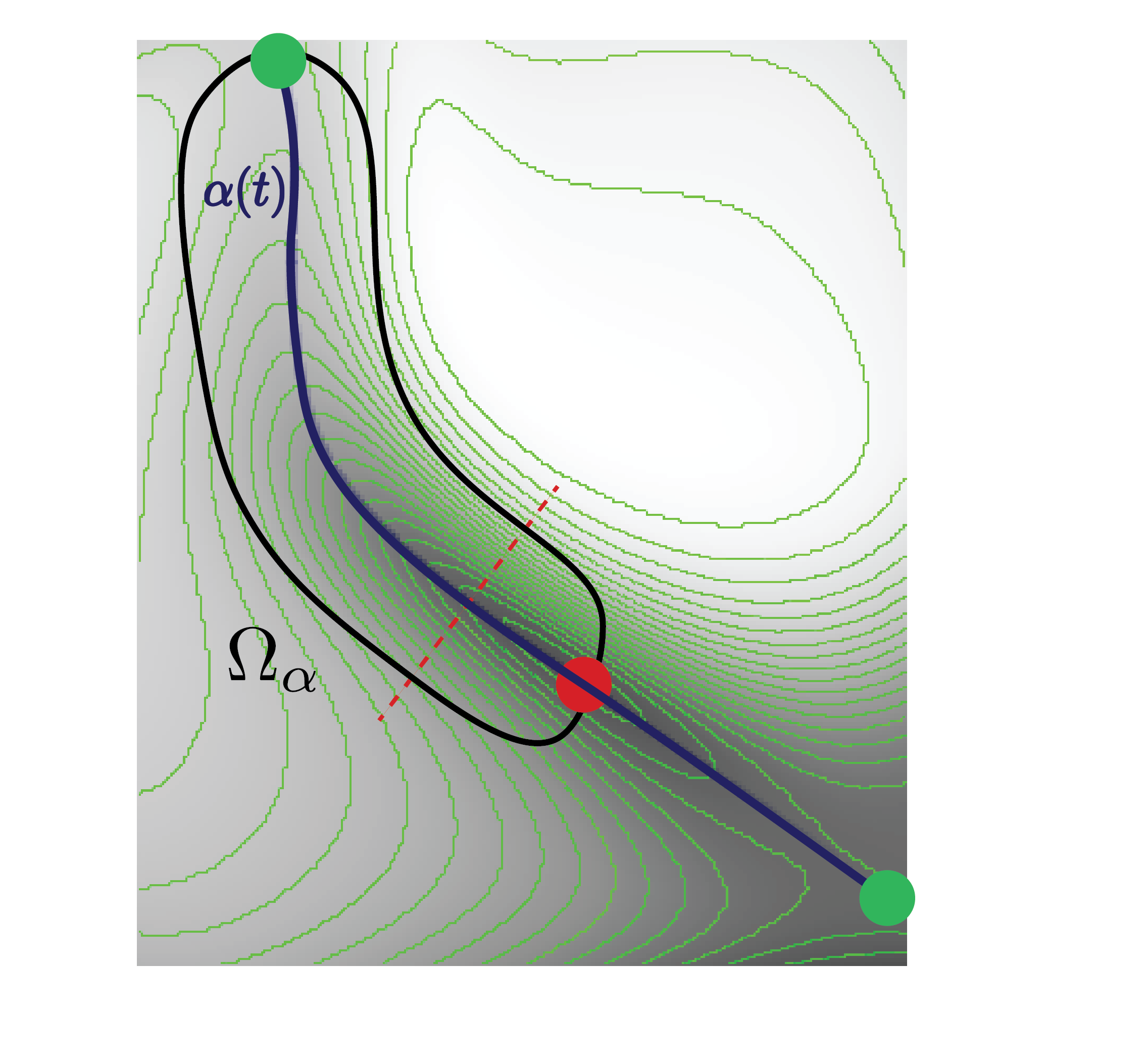}
b) \includegraphics[trim= 0cm 0cm 0cm 2cm, clip=true, width = 0.5 \linewidth]{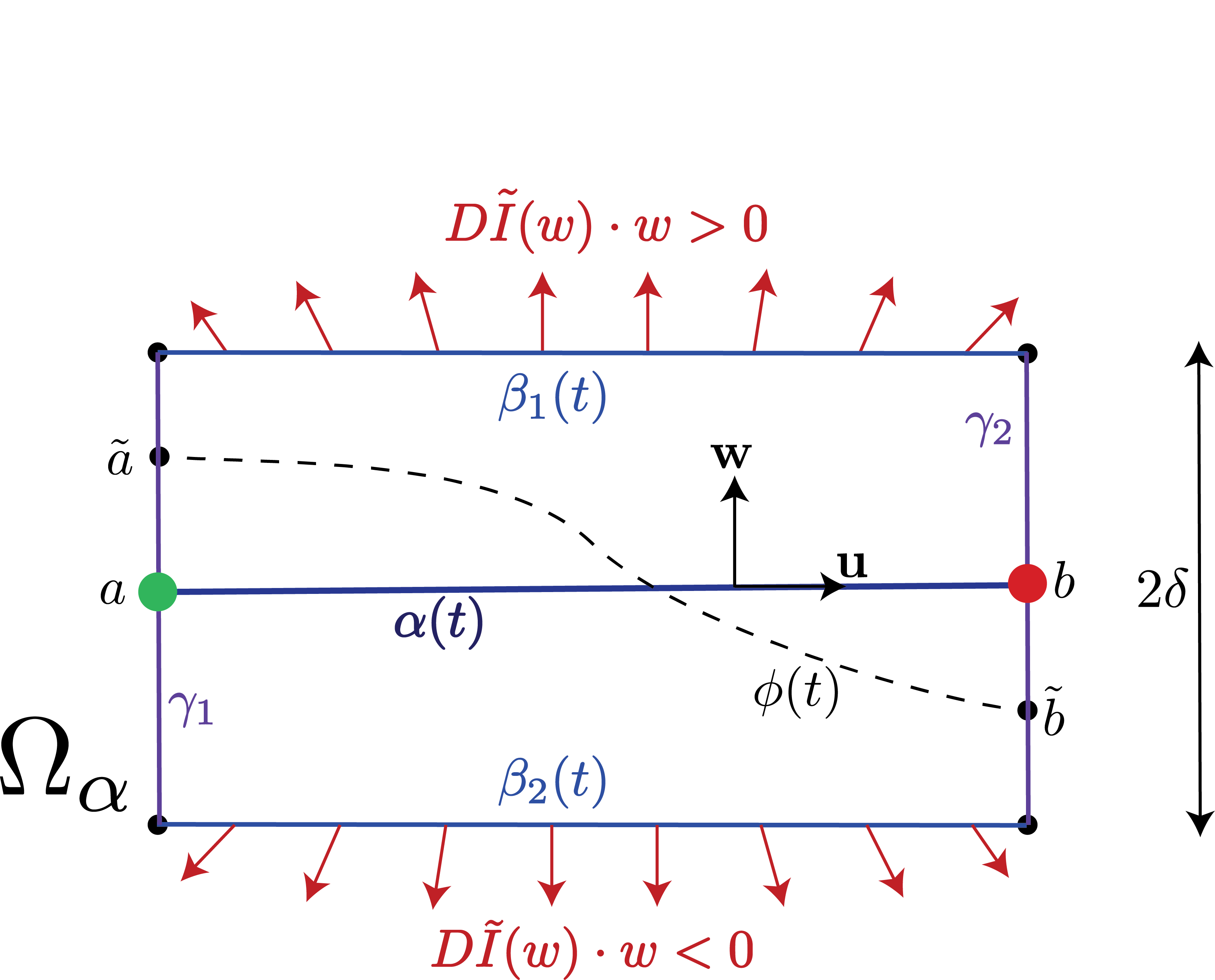}
\caption{We have drawn the tubular neighborhood $\Omega$ with critical contour $\alpha(t)$ as a straight line for simplicity.  We show there exists an MS $1$-cell $\phi(t)$ of the new image $\tilde{I}$ given the conditions in Definition {\rm \ref{defn_cc}} on $\alpha(t)$.  Lemmas {\rm \ref{lemma:crit_points} and \ref{lemma:beta_curves}} show the existence of $\tilde{a}, \tilde{b}$ and $\beta_1(t), \beta_2(t)$ with the property that $D \tilde{I}$ is outward facing, shown as the red vectors pointing outside $\Omega$.  Theorem {\rm \ref{main_theorem}} concludes that there must be an integral path of $\tilde{I}$, shown as $\phi(t)$, inside $\Omega$ and this must be a $1$-cell of $\tilde{I}$. \label{fig:box_diagram}}
\end{center}
\end{figure}

From the above example, it is plausible to believe that if certain curvatures of a surface $S$ are large enough, scalar fields on $S$ resulting from different projections of the normal field may share critical points.  We generalize this to find when they also share MS 1-cells, a 1D analogue to critical points.  See Figure \ref{fig:shifting_ls}.  We now need to show that the presence of a critical contour implies sufficient curvature across the contour to support the above intuition.

The proof will follow in three steps (Figure~\ref{fig:box_diagram}).   We will show that the presence of a $K$-critical contour $\alpha(t) \subset I(x, y)$ implies the following ``$\epsilon$-box structure'' on $\Omega_\alpha$ in the unknown image function $\tilde{I}(x, y)$:

\begin{enumerate}
\item  In Lemma \ref{lemma:crit_points}, we show there exist critical points $\tilde{a}, \tilde{b} \in \tilde{I}(x, y)$ that are $\epsilon$-close to the endpoints $a, b$ of $\alpha(t)$.

\item In Lemma \ref{lemma:beta_curves}, we show there exists two curves $\beta_1(t), \beta_2(t)$ in the $\delta$-tubular neighborhood where the gradient $\nabla \tilde{I}$ points away from $\alpha(t)$.  These two lemmas give the ``$\epsilon$-box structure'' on $\Omega_\alpha$.

\item In Lemma \ref{lemma:flow_dir}, we show $\Omega_\alpha$ contains a $\tilde{I}$ 1-cell, shown as $\phi(t)$.  This is proven by first showing, without loss of generality, that all integral paths flow from left to right.  Then, either $\tilde{a}$ or $\tilde{b}$ must be a saddle and there is an integral path that traverses $\Omega$ connecting to at least one of them, proving Theorem \ref{main_theorem}. 

\end{enumerate}



We now state Lemmas \ref{lemma:crit_points} and  \ref{lemma:beta_curves}.  Their proofs involve technical calculations via Taylor approximations, so we leave them to the appendix.

\begin{lemma}
\label{lemma:crit_points}
Let $a$ be an endpoint of $\alpha(t)$. Given a new rendering function $\tilde{F}$, resulting image $\tilde{I}$, and any $\delta > 0$, there exists a $K_0$ such that if $K > K_0$, the following holds:  $ \exists \, \,\tilde{a} \,\in \Omega_\alpha,$ such that $|| a - \tilde{a} || < \delta$ and $\tilde{a}$ is a critical point of $\tilde{I}$.
\end{lemma}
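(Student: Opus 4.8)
The plan is to realize $\tilde a$ as a zero of the gradient map $D\tilde I$ and to pin it near $a$ by showing that $\tilde I$ inherits from the $K$-critical conditions on $I$ a Hessian whose eigenvalues grow with $K$, while its gradient stays comparatively small; the critical point is then trapped within $O(1/\sqrt K)$ of $a$. First I would differentiate the rendering relation a second time. From $\tilde I = \tilde F(N)$ one gets, in the $\{\bm{u},\bm{w}\}$ frame centered at $a$,
\[
D^2\tilde I = \bm{DN}^T\, \bm{D^2\tilde F}\, \bm{DN} + D\tilde F\cdot \bm{D^2 N},
\]
and the analogous identity for $I$ with $F$. The $K$-critical conditions $|I_{uu}(a)|,|I_{ww}(a)|>K$, combined with the admissible-class bounds on $F$ (bounded $\nabla F$ and bounded, concave $\bm{D^2F}$), force the surface data feeding these expressions---$\bm{DN}(\bm w)$ and $\bm{D^2N}(\bm w,\bm w)$ (and their $\bm u$ analogues at the endpoint)---to be large, of order $K$, along the diagonal directions, while $|DI(a)|,|I_{uw}(a)|<M$ keep the first-order and mixed data controlled.

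Second, I would transfer this largeness to $\tilde I$. The same surface quantities feed $D^2\tilde I$ through $\tilde F$, so the point is to rule out cancellation. Whichever term dominates $I_{ww}$, the generic assumptions carry it over: if the curvature term $\bm{DN}(\bm w)^T\bm{D^2F}\,\bm{DN}(\bm w)$ dominates, then $|\bm{DN}(\bm w)|\sim\sqrt K$ and the definiteness $C_2<\bm{D^2\tilde F}\le 0$ forces $\bm{DN}(\bm w)^T\bm{D^2\tilde F}\,\bm{DN}(\bm w)\lesssim C_2 K$; if the term $DF\cdot\bm{D^2N}(\bm w,\bm w)$ dominates, then $|\bm{D^2N}(\bm w,\bm w)|\sim K$ and the angle bounds $\theta(D\tilde F,\bm{D^2N}(\bm w,\bm w))>\epsilon$, $\theta(D\tilde F,\bm{D^2N}(\bm u,\bm u))>\epsilon$ keep $D\tilde F\cdot\bm{D^2N}(\bm w,\bm w)$ of order $K$, with linear independence of two columns of $\bm{D^2N}$ preventing the two contributions of $\tilde I_{ww}$ from annihilating one another. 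I expect to conclude $|\tilde I_{uu}(a)|,|\tilde I_{ww}(a)|\ge cK$ with $|\tilde I_{uw}(a)|$ bounded, so that $D^2\tilde I(a)$ is nondegenerate with $\|(D^2\tilde I(a))^{-1}\|\lesssim 1/(cK)$, while $|D\tilde I(a)|=|D\tilde F^T\bm{DN}(a)|$ grows at most like the first-order surface data, i.e.\ $|D\tilde I(a)|=O(\sqrt K)$.

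Finally I would close with a quantitative inverse-function / Brouwer-degree step. Writing $D\tilde I(a+v)=D\tilde I(a)+D^2\tilde I(a)\,v+R(v)$, the map $v\mapsto D\tilde I(a+v)$ has the same winding number on a small circle $\|v\|=\rho$ as the nondegenerate linear map $v\mapsto D^2\tilde I(a)\,v$, hence a zero inside, provided $cK\rho$ dominates both $|D\tilde I(a)|$ and the remainder $R(v)$. Choosing $\rho\sim |D\tilde I(a)|/(cK)=O(1/\sqrt K)$ then produces a critical point $\tilde a\in\Omega_\alpha$ of $\tilde I$ with $\|a-\tilde a\|<\delta$ as soon as $K>K_0$, which is the assertion.

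The main obstacle is the junction of the second and third steps: rigorously propagating not just the magnitude but the \emph{nondegeneracy} of the Hessian from $I$ to the a priori unrelated image $\tilde I$ through the shared but uncontrolled surface geometry---this is exactly where the generic assumptions are indispensable, since a single alignment of $D\tilde F$ with $\bm{D^2N}(\bm w,\bm w)$ or a vanishing of $\bm{D^2\tilde F}$ along $\bm{DN}(\bm w)$ would break the estimate---and then controlling the Taylor remainder $R(v)$ against second- and third-order surface derivatives that themselves grow with $K$. Keeping $R$ strictly subordinate to the $cK\rho$ term on a ball small enough to lie inside $\Omega_\alpha$ yet large enough to enclose the zero is the delicate balance; these are the technical Taylor estimates the paper defers to the appendix.
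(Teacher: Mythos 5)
Your overall architecture is the paper's: differentiate $\tilde I=\tilde F(N)$ twice to get \eqref{eqn:Iww}, use concavity and $\|\nabla F\|<C_1$ to force the diagonal data $\bm{D^2N}(\bm w,\bm w)$, $\bm{D^2N}(\bm u,\bm u)$ to be of order $K$, use the genericity (angle and linear-independence) assumptions to transfer this largeness to the linearization of $D\tilde I$ while keeping the drift $D\tilde I(a)$ and the mixed term controlled, and then trap a zero of $D\tilde I$ in a ball whose radius shrinks as $K\to\infty$. Your closing degree/quantitative-IFT step is, if anything, more careful than the paper's, which solves the linearized system \eqref{eqn:T_expand} exactly (the two covectors ${D \tilde{F}}^T \bm{D^2 N}_a ({u}, \cdot)$ and ${D \tilde{F}}^T \bm{D^2 N}_a ({w}, \cdot)$ are independent, hence span the plane) and bounds $(q_0,r_0)$ without explicitly handling the Taylor remainder.

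The one genuine flaw is your first branch, ``the curvature term dominates, so $|\bm{DN}(\bm w)|\sim\sqrt K$.'' Two problems. First, within that branch the transfer to $\tilde I$ is unsound: the admissible class gives only $C_2<\bm{D^2 \tilde F}(u,u)\le 0$, with no bound away from zero, so $|\bm{DN}(\bm w)|\sim\sqrt K$ does \emph{not} force $\bm{DN}(\bm w)^T\bm{D^2\tilde F}\,\bm{DN}(\bm w)\lesssim C_2K$ --- the form $\bm{D^2\tilde F}$ may simply vanish, and then $|\tilde I_{ww}(a)|$ need not be large; moreover in that branch the mixed entry $\tilde I_{uw}=\bm{DN}(\bm u)^T\bm{D^2\tilde F}\,\bm{DN}(\bm w)+{D\tilde F}^T\bm{D^2N}(\bm u,\bm w)$ could itself be of order $K$, destroying the nondegeneracy your inverse-function step requires. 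Second, the branch is in fact vacuous, and noticing this is precisely the role of the paper's Lemma \ref{lemma1}: condition (3) of Definition \ref{defn_cc} ($|DI|<M$ along $\alpha$) together with the angle genericity in \eqref{gen_assumptions} already forces $\|\bm{DN}\|$ to be bounded on the contour, so the $\bm{D^2F}$ term in \eqref{eqn:Iww} is $O(1)$ and the $K$-largeness of $I_{ww}$ can only come from ${DF}^T\bm{D^2N}(\bm w,\bm w)$. This correction also upgrades your estimates: $|D\tilde I(a)|=O(1)$ rather than $O(\sqrt K)$, so the critical point lands within $O(1/K)$ of $a$ (the paper's $\delta(K)=\sqrt2\,\Gamma C_1/(\epsilon^2K)$), not merely $O(1/\sqrt K)$. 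With that repair, your plan coincides with the paper's proof.
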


\begin{lemma}
\label{lemma:beta_curves}
Recall that $w(t)$ is the transversal direction in $\mathbb{R}^2$ to $\alpha(t)$.  Given a new rendering function $\tilde{F}$, resulting image $\tilde{I}$, and $\delta > 0$, there exists a $K_0$ such that if $K > K_0$, the following holds:
Define two curves $\beta_1(t) =  \alpha(t) + \delta w$ and  $\beta_2(t) = \alpha(t) -\delta w $.   On $\beta_1(t)$, $D \tilde{I}_{\beta_1(t)} (w(t)) > 0$ and on $\beta_2(t)$, $D \tilde{I}_{\beta_1(t)} (w(t)) < 0$.

\end{lemma}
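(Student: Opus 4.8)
The plan is to push everything through the chain rule and exploit that $I$ and $\tilde I$ are built from the \emph{same} normal field $N$. Writing $N_w := \bm{DN}\,w$ and $N_{ww} := \bm{D^2 N}(w,w)$, differentiating $\tilde I = \tilde F \circ N$ twice in the transversal direction gives
\begin{align}
\tilde I_w &= D\tilde F(N_w), \\
\tilde I_{ww} &= \bm{D^2\tilde F}(N_w, N_w) + D\tilde F(N_{ww}), \label{eq:plan-chain}
\end{align}
and identically for $I$ with $F$ in place of $\tilde F$. Thus the two images differ in their transversal second derivatives only through the first and second differentials of the rendering functions, while the surface quantities $N_w, N_{ww}$ are shared. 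This shared structure is the whole point of the argument.

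First I would show that the $K$-critical condition $|I_{ww}|>K$ forces a large transversal second-order variation of the normal field, and then transfer it to $\tilde I$. Since $F$ lies in the admissible class (Definition~\ref{rendering_funct_assum}), we have $\|DF\|<C_1$ and $C_2<\bm{D^2 F}(u,u)\le 0$, so every purely $F$-dependent factor in the $I$-analogue of \eqref{eq:plan-chain} is bounded uniformly in $K$. Hence $|I_{ww}|>K$ can hold for arbitrarily large $K$ only through a correspondingly large transversal variation carried by $N_w$ and/or $N_{ww}$; the non-alignment conditions \eqref{gen_assumptions} together with $\|DF\|>\epsilon$ are precisely what prevents this from being an accidental cancellation, pinning the largeness to the surface rather than to the cue. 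Because $\tilde F$ obeys the same admissible-class bounds and the same genericity relations, the shared large surface term reappears in $\tilde I_{ww}$ via \eqref{eq:plan-chain}, giving $|\tilde I_{ww}|\ge cK$ for some $c>0$ depending only on $C_1,C_2,\epsilon$, while the same bounds force $\tilde I_w=D\tilde F(N_w)$ to grow strictly slower in $K$. Up to replacing $\tilde I$ by $-\tilde I$ — which preserves both the MS $1$-cells and the geometry of the claim — I may assume $\tilde I_{ww}>0$, so that $\alpha$ is a transversal valley of $\tilde I$; the ridge case is identical after interchanging $\beta_1$ and $\beta_2$.

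Finally I would integrate transversally. Fixing $\delta$ and then choosing $K_0$, the bound $\tilde I_{ww}\ge cK$ holds with a definite sign across the entire transversal segment $\{\alpha(t)+\eta\,w : |\eta|\le\delta\}$, so $\eta\mapsto \tilde I_w(\alpha(t)+\eta\,w)$ is strictly increasing with slope at least $cK$. Since its value at $\eta=0$ grows strictly slower than $cK$, for $K>K_0(\delta)$ the unique zero of this increasing function lies strictly inside $(-\delta,\delta)$; therefore $\tilde I_w>0$ at $\eta=+\delta$ and $\tilde I_w<0$ at $\eta=-\delta$, which is exactly $D\tilde I_{\beta_1(t)}(w)>0$ and $D\tilde I_{\beta_2(t)}(w)<0$.

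The crux — and the main obstacle — is the transfer step: showing that the large transversal curvature of $I$ reappears in $\tilde I$ with a genuine \emph{two-sided} bound and a definite sign, and that this bound persists across the whole $\delta$-neighborhood rather than only along $\alpha$. Identifying which surface quantity ($N_w$ versus $N_{ww}$) actually carries the largeness, and checking that the non-alignment conditions \eqref{gen_assumptions} combined with the bounded, nonpositive Hessian of the admissible class deliver a lower bound on $|\tilde I_{ww}|$ (not merely an upper bound), is the delicate content I would defer to the Taylor-approximation computations in the appendix; controlling the third-order transversal remainder — equivalently, securing sign-definiteness of $\tilde I_{ww}$ off the curve — is the one place where care is essential.
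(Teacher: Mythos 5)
Your proposal follows essentially the same route as the paper's proof: the chain-rule split of $\tilde I_{ww}$, the concavity and $\|DF\|<C_1$ bounds of the admissible class to force $\|\bm{D^2 N}(w,w)\|\gtrsim K/C_1$ along $\alpha$, the genericity conditions \eqref{gen_assumptions} to transfer a lower bound $|D\tilde F^T\bm{D^2 N}(w,w)|\gtrsim K\epsilon^2/C_1$ to the new image, boundedness of the zeroth-order term $D\tilde F^T\bm{DN}(w)$ via the $|DI|<M$ condition, and a transversal first-order expansion/integration so that the large term with sign depending on $\pm\delta$ dominates. The one delicate point you flag---sign-definiteness of the second-order term off the curve itself---is treated no more rigorously in the paper, whose proof evaluates $\bm{D^2 N}$ at $\alpha(t)$ and writes the transversal Taylor expansion with an ``$\approx$,'' so your proposal matches the paper's argument in both structure and level of rigor.
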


These two lemmas prove that the vector field $D \tilde{I} (w)$ behaves as shown in red in Figure \ref{fig:box_diagram}(b) and that stationary points of $D \tilde{I}$ are at $\tilde{a}$ and $\tilde{b}$.  It remains to show that this vector field constraint implies the integral line $\phi(t)$ for $\tilde{I}$ inside $\Omega_\alpha$.

\begin{lemma}
\label{lemma:flow_dir}
Let $\alpha(t)$ be a $K$-critical contour with the conditions from Definition {\rm \ref{defn_cc}}.   Given a new rendering function $\tilde{F}$, resulting image $\tilde{I}$, and $\delta > 0$, apply the previous two lemmas.  We can find critical points $\tilde{a}, \tilde{b}$ of $\tilde{I}$ and two curves $\beta_1(t), \beta_2(t)$ arbitrarily close to $\alpha(t)$, as illustrated in Figure {\rm \ref{fig:box_diagram}}.  Parametrize two line segments $\gamma_1(s), \gamma_2(s)$ with the following properties:
\begin{align*}\gamma_1(0) &= \beta_1(0), \gamma_1(1) = \beta_2(0), \gamma_1(0.5) = \tilde{a},
\\
\gamma_2(0) &= \beta_1(1), \gamma_2(1) = \beta_2(1), \gamma_2(0.5) = \tilde{b}.
\end{align*}

Define the region $\Omega_\alpha$ as that bounded by the curves $\{\gamma_1, \beta_1, \gamma_2, \beta_2\}$.  Without loss of generality, every integral path of $\tilde{I}$ that intersects $\Omega_\alpha$ enters from a point on $\gamma_1$ and leaves on a point on $\gamma_2$.

\end{lemma}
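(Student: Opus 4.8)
The plan is to treat this as a planar flow problem on the thin box $\Omega_\alpha$ and to classify the four boundary pieces as inflow or outflow for the ascending gradient flow $\dot x = \nabla \tilde I$, then promote ``no transversal inflow'' plus ``monotone tangential drift'' into the stated left-to-right structure. I would introduce frame coordinates $(\xi,\eta)$ aligned with $\bm u = \alpha'$ and $\bm w$, so that $\Omega_\alpha = \{0 \le \xi \le L,\ |\eta| \le \delta\}$ with $\gamma_1 = \{\xi = 0\}$, $\gamma_2 = \{\xi = L\}$, $\beta_1 = \{\eta = \delta\}$, $\beta_2 = \{\eta = -\delta\}$. The first step dispatches the transversal boundaries: by Lemma \ref{lemma:beta_curves}, $\tilde I_\eta > 0$ on $\beta_1$ and $\tilde I_\eta < 0$ on $\beta_2$, i.e. $\nabla \tilde I$ meets the outward normal positively along $\beta_1 \cup \beta_2$. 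Hence no ascending integral curve can cross $\beta_1$ or $\beta_2$ inward, so any integral curve meeting $\Omega_\alpha$ must have entered through $\gamma_1 \cup \gamma_2$. This part rests directly on the already-established box structure and needs nothing beyond reading off the sign of the boundary flux.

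Next I would locate the single through-curve and rule out interior stationary points. The large transversal curvature forced by the $K$-critical condition (Definition \ref{defn_cc}(1)) is inherited by $\tilde I$ through the genericity alignment of $D\tilde F$ with $\bm{D^2 N}(\bm w, \bm w)$, so $\tilde I_\eta$ is strictly monotone in $\eta$ across the box; for each $\xi$ this yields a unique zero $\eta^\ast(\xi)$, tracing a smooth curve $\phi$ from $\gamma_1$ to $\gamma_2$ on which $\nabla \tilde I$ is purely tangential (in $\bm u$). On $\phi$ the genericity angle bounds keep $\tilde I_\xi = D\tilde F(\bm{DN}\,\bm u)$ away from $0$, so $\tilde I_\xi$ has fixed sign there and $\nabla \tilde I$ vanishes only at the endpoints; combined with Lemma \ref{lemma:crit_points} this shows the only critical points of $\tilde I$ in $\overline{\Omega_\alpha}$ are $\tilde a, \tilde b$ on $\gamma_1, \gamma_2$, and identifies $\phi$ as the candidate $1$-cell.

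Finally I would orient the frame---this is the ``without loss of generality''---by relabeling $a \leftrightarrow b$ (equivalently flipping the sign of $\bm u$) so that $\tilde I_\xi > 0$ along $\phi$. Shrinking $\delta$ and using continuity of $D\tilde I$, one gets $\tilde I_\xi > 0$ on all of $\Omega_\alpha$ off $\{\tilde a, \tilde b\}$, so along every integral curve $\frac{d}{ds}\xi = \nabla \tilde I \cdot \bm u = \tilde I_\xi > 0$: the $\xi$-coordinate increases strictly and the flow is directed from $\gamma_1$ toward $\gamma_2$. Together with the first step (entry only through $\gamma_1 \cup \gamma_2$, with monotonicity forbidding entry at the maximal-$\xi$ edge), this gives that every integral path meeting $\Omega_\alpha$ enters on $\gamma_1$ and is carried across toward $\gamma_2$, with $\phi$ the path that reaches $\tilde b$; this is the claim.

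I expect the main obstacle to be the second and third steps: showing that the along-contour derivative $\tilde I_\xi$ of the \emph{new} image keeps a constant sign, so that the flow is genuinely monotone and no spurious interior critical point appears. This is exactly where the genericity hypotheses (the $\theta(D\tilde F, \cdot) > \epsilon$ angle bounds and $\|D\tilde F\| > \epsilon$) and the inherited blow-up of the transversal second derivative must be combined through Taylor estimates, taking the thin-box and large-$K$ limits simultaneously. The degenerate behavior at the saddle endpoint, where $\tilde I_\xi \to 0$ and $\phi$ must still terminate cleanly on $\gamma_1, \gamma_2$, also requires care.
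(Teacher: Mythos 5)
Your step 1 (the sides $\beta_1,\beta_2$ are pure outflow by Lemma \ref{lemma:beta_curves}, so integral curves can only enter through $\gamma_1\cup\gamma_2$) matches what the paper uses implicitly. The gap is in steps 2 and 3. The pivotal claim---that the genericity hypotheses keep $\tilde I_\xi = D\tilde F^{T}\bm{DN}(\bm u)$ bounded away from zero, first along the zero-set of $\tilde I_\eta$ and then, after shrinking $\delta$, on all of $\Omega_\alpha$---is not supported by anything in the paper's assumptions, and is false in general. The generic conditions bound the angles between $D\tilde F$ and $\bm{DN}(\cdot)$, $\bm{D^2N}(\bm u,\bm u)$, $\bm{D^2N}(\bm w,\bm w)$ and give $\|D\tilde F\|>\epsilon$; these are used to bound the displacement of critical points (Lemma \ref{lemma:crit_points}) and to make the transversal second derivative of $\tilde I$ inherit the blow-up in $K$ (Lemma \ref{lemma:beta_curves}). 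None of them lower-bounds the \emph{tangential first} derivative of the new image. Indeed no such bound can hold: $\nabla\tilde I$ vanishes at $\tilde a,\tilde b$, and nothing prevents $\tilde I$ restricted to the valley floor from being non-monotone in the middle of the contour, which produces interior critical points of $\tilde I$ inside $\Omega_\alpha$. The paper itself concedes this possibility---its proof opens with ``assume there are no critical points of $\tilde I$ in the interior of $\Omega_\alpha$; if there are, bisect and repeat''---so any argument that purports to \emph{rule out} interior critical points from the stated hypotheses is proving something false. Two secondary problems: uniqueness of your zero $\eta^\ast(\xi)$ needs $\tilde I_{\eta\eta}$ of one sign throughout the box, which requires third-order (i.e.\ $\bm{D^3N}$) control that neither the hypotheses nor Lemma \ref{lemma:beta_curves} provide (that lemma only gives signs on the two boundary curves); and the set $\{\tilde I_\eta = 0\}$ is a height-valley line, not an integral curve of $\nabla\tilde I$, so it is not literally ``the candidate $1$-cell.''

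The paper avoids all of this quantitative control with a soft topological argument that you should compare against. Granting no interior critical points (bisection otherwise), it defines $S_1$, $S_2$ to be the sets of points lying on integral curves entering from $\gamma_1$, respectively $\gamma_2$; the outflow condition on $\beta_1,\beta_2$ gives $\Omega_\alpha = \bar S_1\cup\bar S_2$. If both were nonempty, connectedness of $\Omega_\alpha$ would force a point $p\in\bar S_1\cap\bar S_2$ with $\nabla\tilde I(p)\neq 0$; then continuous dependence of ODE solutions on initial conditions (the gradient field is Lipschitz) makes two integral paths through arbitrarily nearby points stay arbitrarily close, contradicting that one of them came from $\gamma_1$ and the other from $\gamma_2$. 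No sign information about $\tilde I_\xi$, no monotone drift, and no classification of interior critical points is ever needed. If you want to salvage your route, you would have to either add hypotheses strong enough to exclude interior critical points (changing the lemma) or weaken your steps 2--3 to the dichotomy the paper actually proves: not that the flow is monotone, but that the curves entering the box cannot come from both ends simultaneously.
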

\begin{proof}
First, we assume that there are no critical points of $\tilde{I}$ in the interior of $\Omega_\alpha$.  If there are, bisect $\Omega_\alpha$ into $\Omega_1$ and $\Omega_2$ and repeat the following argument.

Let $S_1$ be the set of all points in $\Omega_\alpha$ on integral curves entering from points on $\gamma_1$.  We say that an integral path $P$ enters from $\gamma_1$ when there exists an ${r, s}$ such that $\gamma_1(r) = P(s)$ and $P'(s) \cdot \alpha'(0) > 0$.  Let $S_2$ be the set of all points in $\Omega$ on integral curves entering from points on $\gamma_2$.

Clearly, $\Omega_\alpha = \bar{S_1} \cup \bar{S_2}$.  It suffices to show that one of the $S_i$ is empty.  Suppose not; suppose $S_1 \neq \varnothing, S_2 \neq \varnothing$.  Being a tubular neighborhood of a curve $\alpha(t)$, $\Omega_\alpha$ is a topologically connected space in $I$.  Thus, $\bar{S_1}$ and $\bar{S_2}$ must not be disjoint.  There exists a point $p \in \bar{S_1} \cap \bar{S_2}$.  As there are no critical points in $\Omega_\alpha$, $\nabla \tilde{I} (p) \neq 0$.  For any $\epsilon$, there exists an $\epsilon$ neighborhood of $p$ containing both an integral path $\psi_1 \subset S_1$ and an integral path $\psi_2 \subset S_2$.  However, an integral path is the solution to a differential equation $\psi'(t) = \nabla I_2 (\psi(t))$ with initial condition $\psi(0) = q$.  For a Lipschitz continuous gradient field, there is continuous dependence of solutions on the initial conditions $\psi(0)$ \cite{Sastry99}.   Thus, $\psi_1$ and $\psi_2$ must be arbitrarily close together, which yields a contradiction, as they go through points on opposite sides of $\Omega_\alpha$.
\end{proof}

We now have all the pieces to prove the main theorem. It remains to show that, given the conditions in the above lemma, there is an MS 1-cell of $\tilde{I}$ contained in $\Omega_\alpha$.

\begin{proof}[Proof of Theorem {\rm \ref{main_theorem}}]
From Lemma \ref{lemma:flow_dir}, we see that all integral lines flow from a point on $\gamma_1$ to a point on $\gamma_2$ or vice versa.  $\tilde{a}$ is a critical point on $\gamma_1$ and $\tilde{b}$ is a critical point on $\gamma_2$.  As the flow direction on $\beta_1, \beta_2$ points outward for all $t$, the critical index of $\tilde{a}$ and $\tilde{b}$ can only differ by at most 1.  Without loss of generality, $\tilde{b}$ has an incoming integral path $\phi$ starting from the other side of $\Omega_\alpha$.  Thus $\tilde{b}$ must be a saddle point and $\phi$ must be an MS 1-cell traversing $\Omega_\alpha$.~~~
\end{proof}

\begin{corollary}
As $K \rightarrow \infty$, as in the case of our ideal contour in Lemma {\rm \ref{lemma:shading_derivs}}, a $K$-critical contour  $\alpha(t)$  in any admissible image represents an MS $1$-cell in any other admissible image.
\end{corollary}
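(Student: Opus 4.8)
The plan is to obtain the corollary as the $\epsilon \to 0$ limit of Theorem~\ref{main_theorem}, using the fact (noted just after Definition~\ref{defn_cc}) that $K$-critical contours converge pointwise to the ideal contour of~(\ref{ideal_contour}) as $K \to \infty$. Fix a generic surface $S$ and two admissible rendering functions $F, \tilde F$, producing images $I$ and $\tilde I$. Theorem~\ref{main_theorem} already supplies, for every tolerance $\epsilon > 0$, a threshold $K(\epsilon)$ such that whenever $K > K(\epsilon)$ the $\epsilon$-neighborhood of a $K$-critical contour $\alpha$ in $I$ contains an MS $1$-cell of $\tilde I$. The corollary is precisely the assertion that, as the tolerance is driven to zero, this nearby $1$-cell collapses onto $\alpha$ itself, so that in the limit the critical contour \emph{is} an MS $1$-cell of $\tilde I$.

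Concretely, I would choose a sequence $\epsilon_n \downarrow 0$ and set $K_n = K(\epsilon_n) \to \infty$. For each $n$, Theorem~\ref{main_theorem} yields an MS $1$-cell $\phi_n$ of $\tilde I$ lying within the $\epsilon_n$-tubular neighborhood $\Omega_{\alpha_n}$ of a $K_n$-critical contour $\alpha_n$. The two families must then be shown to share a common limit. On one side, the $K_n$-critical contours $\alpha_n$ converge pointwise to the ideal contour $\alpha_\infty$ by the remark following Definition~\ref{defn_cc}; this is the analogue of the shading limit of Lemma~\ref{lemma:shading_derivs}, under which the transversal second derivative blows up and the contour ``pinches in.'' On the other side, the Hausdorff distance between $\phi_n$ and $\alpha_n$ is at most $\epsilon_n$ by construction, so $\phi_n$ and $\alpha_n$ (and hence $\phi_n$ and $\alpha_\infty$) become arbitrarily close.

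It then remains to promote ``arbitrarily close'' to ``coincides in the limit.'' Since each $\phi_n$ is an integral curve of $\nabla \tilde I$, and $\tilde I$ has a Lipschitz gradient field by the $C^2$ assumption on $\tilde F$, the $\phi_n$ enjoy the same continuous dependence on data used in the proof of Lemma~\ref{lemma:flow_dir}; trapped in the shrinking neighborhoods $\Omega_{\alpha_n}$, they cannot wander and must converge to $\alpha_\infty$ as well. Thus in the limit the $K$-critical contour and an MS $1$-cell of $\tilde I$ are the same curve, which is the claim. Finally, because $F$ and $\tilde F$ range over the same admissible class, the construction is symmetric in $I$ and $\tilde I$, which is what licenses the ``any admissible image / any other admissible image'' phrasing.

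I expect the main obstacle to be this last step: controlling the limiting curve of the $\phi_n$. Theorem~\ref{main_theorem} only locates \emph{some} $1$-cell in each shrinking neighborhood, so one must rule out that the $\phi_n$ oscillate or that their endpoints fail to converge. A clean way to close this gap is to note that the endpoints are pinned to within $\delta$ of the fixed endpoints of $\alpha_\infty$ by Lemma~\ref{lemma:crit_points}, and that, together with the uniform outward-flow condition of Lemma~\ref{lemma:beta_curves}, the integral curve $\phi_n$ is confined to $\Omega_{\alpha_n}$. An Arzel\`a--Ascoli / continuous-dependence argument then extracts a convergent subsequence whose limit must be $\alpha_\infty$, and uniqueness of that limit upgrades subsequential convergence to full convergence.
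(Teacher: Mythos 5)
Your proposal is correct and follows essentially the same route as the paper: the paper's (two-sentence) proof likewise applies Theorem~\ref{main_theorem} with the tolerance driven to zero, observes that the tubular neighborhood $\Omega_\alpha$ shrinks to zero width as $K \to \infty$, and concludes that the trapped $1$-cell $\phi(t)$ must in the limit lie on $\alpha(t)$. Your version merely makes the limiting step explicit (via Hausdorff distance, endpoint pinning, and an Arzel\`a--Ascoli / continuous-dependence extraction) where the paper leaves it as an immediate consequence of the shrinking neighborhood.
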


\begin{proof}
As $K \rightarrow \infty$, the tubular neighborhood $\Omega_\alpha$ of $\alpha(t)$ shrinks to zero width.  The integral path $\phi(t)$ must traverse $\Omega_\alpha$ and thus must eventually lie on $\alpha(t)$.
\end{proof}

This means that an ideal contour represents a visual commonality among all images of the surface $S$.  As the normal slant function is a member of our admissible rendering functions, an ideal contour also lies infinitely close to a surface property: an MS 1-cell of the slant function.  A decomposition of the slant function into stable and unstable manifolds via its MS 1-cells is a representation of the surface (very similar to a concave/convex representation) that we are investigating further.  Thus, we can now interpret an ideal image contour as a surface property that ``shines through'' in every image created by any of the rendering functions.

\begin{corollary}
For $K$ sufficiently large, a $K$-critical contour $\alpha(t)$ in image $I$ of surface $S$ aligns with an MS $1$-cell of the slant of the surface normal field of $S$.
\label{cor:slant}
\end{corollary}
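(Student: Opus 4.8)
The plan is to deduce this corollary from the preceding one by exhibiting the slant as the image produced by a specific admissible rendering function, so that the statement ``$\alpha$ aligns with an MS $1$-cell of the slant'' becomes an instance of ``$\alpha$ represents an MS $1$-cell in any other admissible image.'' First I would recall that the slant $\sigma$ at a point is the angle between the surface normal $N$ and the view direction $e_3$, so that $\cos\sigma = e_3\cdot N = n_3$; thus the foreshortening/texture cue $F_T(N)=e_3\cdot N$ introduced in section~\ref{sec:image_formation} renders precisely the cosine of the slant field, $\tilde I(x,y)=F_T(N(x,y))=\cos\sigma(x,y)$.

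The key step is to verify that $F_T$ lies in the admissible cue class of Definition~\ref{rendering_funct_assum}, restricting to the visible (front-facing) hemisphere $n_3>0$, which is all that an orthographic projection exposes. For the bounded-variation condition, the intrinsic gradient on $S^2$ of the restricted linear function $N\mapsto e_3\cdot N$ is the tangential part of $e_3$, whose norm is $\sqrt{1-n_3^2}\le 1$, so $\|\nabla F_T\|\le 1$ and the bound holds with, say, $C_1=2$. For the concavity condition, the restriction to $S^2$ of an ambient linear function $N\mapsto a\cdot N$ is a degree-one spherical harmonic whose covariant Hessian is $-(a\cdot N)\,g$; hence $\bm{D^2 F_T}_N(u,u)=-n_3\,|u|^2$, which for unit $u\in T_N S^2$ equals $-n_3$. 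On the visible hemisphere this gives $-2<\bm{D^2 F_T}_N(u,u)=-n_3\le 0$, so $F_T$ is concave with a uniform lower bound ($C_2=-2$). Thus $F_T$ is admissible, and the single highlight on a rendered unit sphere (the sole maximum of $n_3$ at $N=e_3$) is exactly the concavity this encodes.

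With $F_T$ admissible, I would apply the preceding corollary (the $K\to\infty$ statement) with $\tilde F=F_T$: for $K$ sufficiently large the $K$-critical contour $\alpha(t)$ lies arbitrarily close to an MS $1$-cell of the scalar field $\cos\sigma=n_3$. Finally I would pass from $\cos\sigma$ to the slant angle itself by monotone invariance of the MS complex. Since $\sigma=\arccos(n_3)$ is a strictly decreasing reparametrization on $n_3\in(0,1)$, $\nabla\sigma$ is a nowhere-zero negative multiple of $\nabla n_3$; the two fields therefore have the same critical points and the same integral curves as point sets (only the flow orientation reverses, swapping the roles of maxima and minima while leaving saddles fixed). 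Consequently the separatrices, hence the $1$-cells, of the MS complexes of $\cos\sigma$ and of $\sigma$ coincide, and $\alpha(t)$ aligns with an MS $1$-cell of the slant, as claimed.

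The main obstacle is the admissibility verification in the second step, specifically confirming that the foreshortening cue's spherical Hessian is nonpositive throughout the visible hemisphere (genuine concavity, not merely at the pole) and is uniformly bounded below; once this is in hand, both the appeal to the preceding corollary and the monotone-invariance argument for the MS $1$-cells are routine.
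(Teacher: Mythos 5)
Your proposal is correct and follows essentially the same route as the paper: render the surface with $\tilde{F}(N) = \langle e_3, N\rangle$, invoke the main theorem to place an MS $1$-cell of the resulting image $\tilde{I}$ along $\alpha(t)$, and pass from $\tilde{I} = \cos\sigma$ to the slant angle itself by monotone invariance of the MS complex. The only difference is that you explicitly verify admissibility of this rendering function (computing the spherical Hessian $-n_3\,g$ and restricting to the visible hemisphere so that concavity holds), a point the paper leaves implicit.
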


\begin{proof}
Define $\tilde{F}(N)$ = $\langle e_3, N \rangle$ as the rendering function corresponding to a Lambertian surface with light source in the view direction $e_3$.  Define $\tilde{I}$ as the image associated with the surface $S$ using this rendering function.  As the slant of the normal field is a monotonic function of the image $\tilde{I}$, it shares the same MS complex as $\tilde{I}$.  Apply the theorem to show that $\alpha(t)$ aligns with an MS 1-cell of $\tilde{I}$.
\end{proof}

\enlargethispage{-3pt}

\section{The Morse--Smale complex on shading and slant}

We now apply the above theory to a number of different shapes to illustrate how critical contours computed from a shaded image
relate to the MS complex on the slant function of the surface. The results are ordered in complexity and correspond to
Figures \ref{fig:rot_sigmoid_MS}, \ref{fig:furrow_MS}, and \ref{fig:blob_MS}.

\textit{A note on methodology}.
A 3D mesh was generated for each figure, which was then rendered under different conditions to produce each image.  We use \cite{Reininghaus11} to calculate the MS complex and consider persistence simplifications with few critical points from these images.  Alternate ways to simplify the MS complex in a more salient manner are \cite{Sahner08, Weinkauf09}.  We experimentally verified that MS 1-cells with large $I_{ww}$ remain positionally stable across these images, as predicted by our theorem.  We observe that, because the computations are run directly on quantized pixel values, there are certain numerical issues. We do believe that results could be further improved, while also generalizing to nonsmooth images, by computing on oriented filter responses instead.

The first example (Figure~\ref{fig:rot_sigmoid_MS})  consists of a large bump which, as the light source moves, illustrates the common critiques of flow-based approaches:
large movement in the isophotes (and in the location of the maximum in intensity). Notice, however, that two of the MS 1-cells (blue curves) form a circle and remain fixed surrounding the bump: these have large $K$; i.e., these lie along the large bright-dark-bright transitions and are the critical contours. Note that there are other 1-cells and critical points; these do not satisfy the $I_{ww}$ condition and so are irrelevant to the shape representation.

The second example is the furrow shape (Figure~\ref{fig:furrow_MS}) shown from two views and with drastically different lightings. Notice how the isophotes move, how the
maxima move, but how the critical contours remain stable.


The next example consists of images of a ``blob'' shape (Figure~\ref{fig:blob_MS}) constructed from random perturbations of a sphere. Notice again the stability of the critical contours, how these agree across lightings and for the slant function, and how these stable 1-cells correspond to
the suggestive contours that were computed from the true 3D shape.

\begin{figure}[t]
\begin{center}
\hspace{3cm} \includegraphics[trim= 0cm 9cm 0cm 9cm, clip=true, width = 0.3 \linewidth]{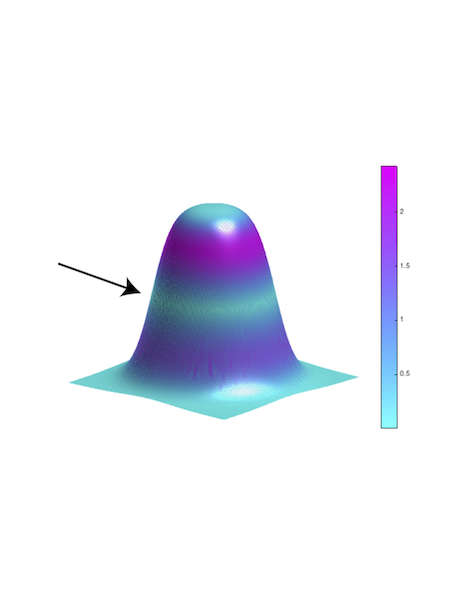} \newline
\includegraphics[width = 0.225 \linewidth]{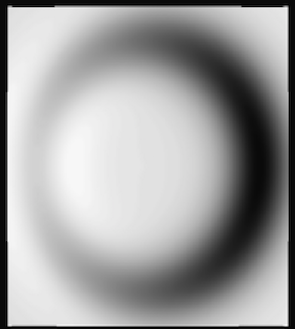}
\includegraphics[width = 0.225 \linewidth]{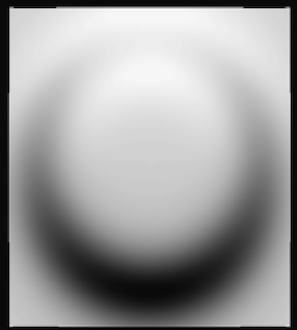}
\includegraphics[width = 0.235 \linewidth]{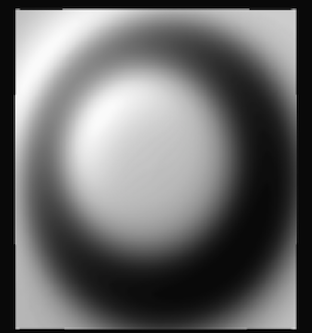}
\includegraphics[width = 0.228 \linewidth]{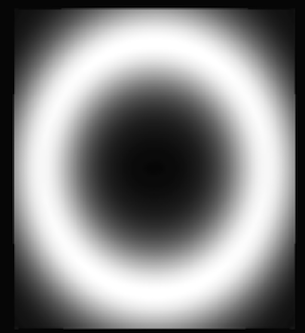} \newline
\includegraphics[trim = 17cm 3cm 15cm 3cm, clip=true,width = 0.23 \linewidth]{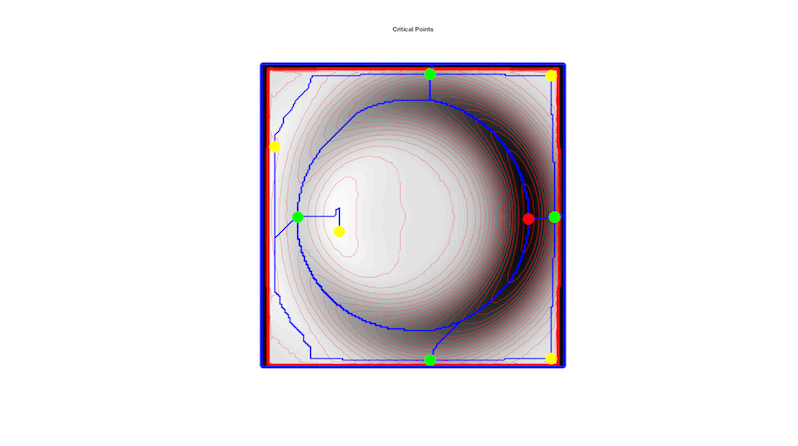}
\includegraphics[trim = 17cm 3cm 15cm 3cm, clip=true,width = 0.23 \linewidth]{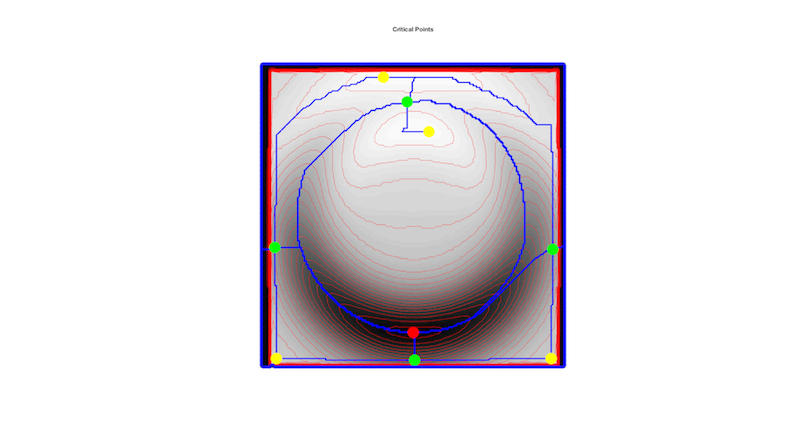}
\includegraphics[trim = 17cm 3cm 15cm 3cm, clip=true,width = 0.23 \linewidth]{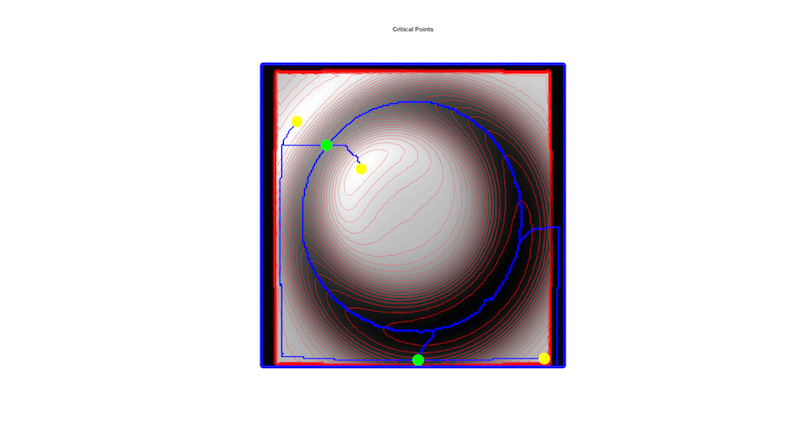}
\includegraphics[trim = 17cm 3cm 15cm 3cm, clip=true,width = 0.23 \linewidth]{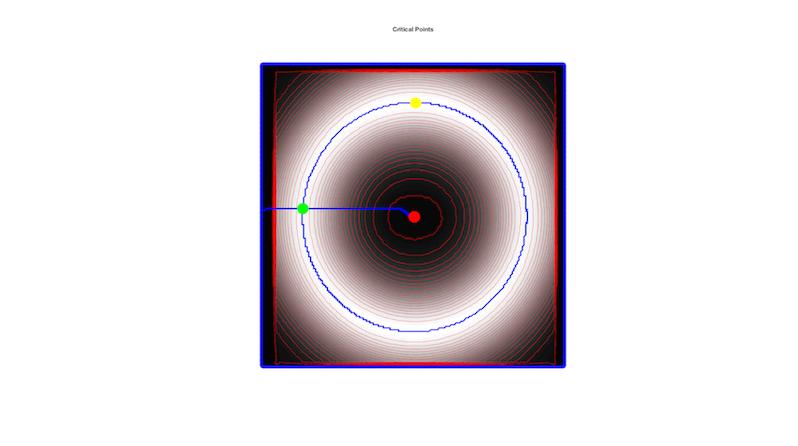}
\caption{Top: A slightly perturbed sigmoid rotated around the z-axis.  The color indicates the absolute value of the Gaussian curvature.  An arrow points to a turquoise band which is centered along a contour of near zero Gaussian curvature; this is a $1$-cell of the MS complex of the slant function.  Under a wide class of rendering functions (as described above), the resulting shaded image will contain a $1$-cell along this band.
First row: From left to right, the first two images are Lambertian shaded renderings of the above rotated sigmoid with different light sources.  The third image is a specular rendering.  The fourth image is the slant function. \label{fig:rot_sigmoid}
Second row: Corresponding MS complexes to the images above along with isophotes in red.  Blue arcs correspond to the $1$-cells. Yellow, green, and red points correspond to maximum, saddle, and minimum critical points.  Notice the blue common circular contour (which consist of unions of $1$-cells). \label{fig:rot_sigmoid_MS}}
\end{center}
\end{figure}

\begin{figure}[t]\vspace*{-2pt}
\begin{center}
\includegraphics[width = 1\linewidth]{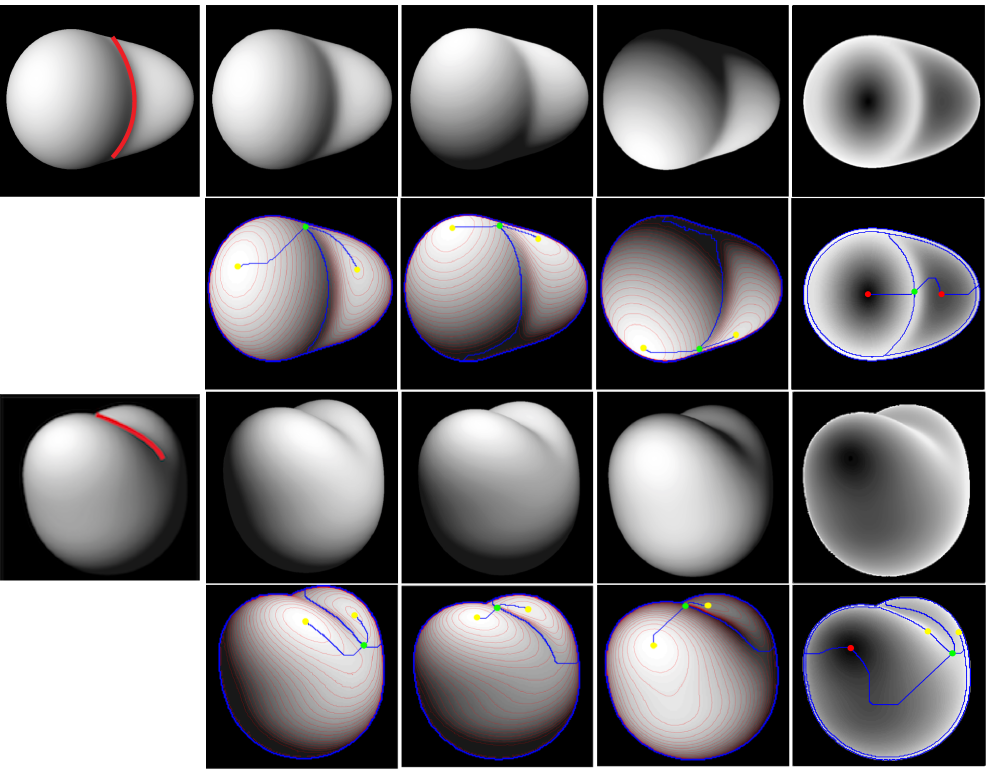}\vspace*{-6pt}
\caption{A second example that shows the commonality between some $1$-cells over both large light source changes and large view changes.  Row $1$, first column: A ``furrow shape'' image with a sketched red contour showing the critical contour.  Row $1$, columns {\rm 1--3:}  The furrow shape lit from three directions.  Row $1$, column {\rm 4:} True slant.  Row {\rm 2:} The MS $1$-cells with critical points (minima in red, saddles in green, maxima in yellow) corresponding to the images in the first row.  Third and fourth rows: Analogous to rows $1$ and $2$, with a different viewpoint.}\vspace*{-3pt}
\label{fig:furrow_MS}
\end{center}
\end{figure}

\begin{figure}[t]
\begin{center}
\includegraphics[width = 1 \linewidth]{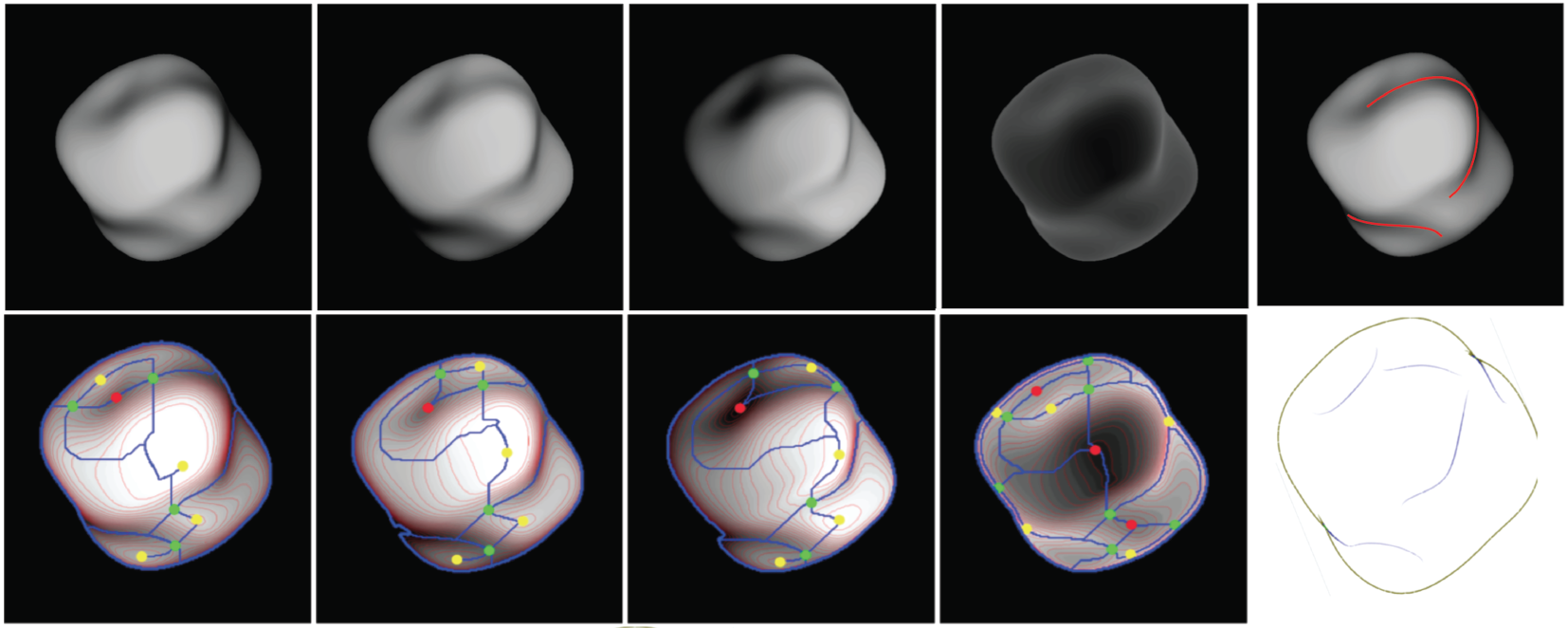}
\caption{First row, columns {\rm 1--3:} Lambertian images of a blob under different rendering functions.  Light source differences across images are at least $45$ degrees. First row, column {\rm 4:} The true slant function.  First row, column {\rm 5:} Sketched critical contours.
Second row, columns {\rm 1--4:} Corresponding MS complexes to above images.  Second row, column {\rm 5:} For contrast, the suggestive contours (in perspective projection) {\rm \cite{DeCarlo03}} for the same surface.  The extra suggestive contour in upper right is not seen in orthographic projection.
\label{fig:blob_MS}}
\end{center}
\end{figure}
In our next example, we experimentally verify Corollary \ref{cor:slant}.   In Figure \ref{fig:horse_compare_MS}, we overlay the MS complex for the horse image with the MS complex of the slant field.  Note the correspondence between the red segmentation, blue segmentation, and suggestive contours, as predicted by our theory.  On those curves where the two MS complexes are not in exact alignment, the value of $K$ is not sufficiently large.  This indicates where the qualitative structure of the slant (of the normal field) can be immediately and robustly inferred from a shaded image via the MS complex.

\begin{figure}[t]
\begin{center}
a) \includegraphics[trim = 5cm 2cm 5cm 2cm, clip=true, width = 0.3 \linewidth, height = 0.27\linewidth]{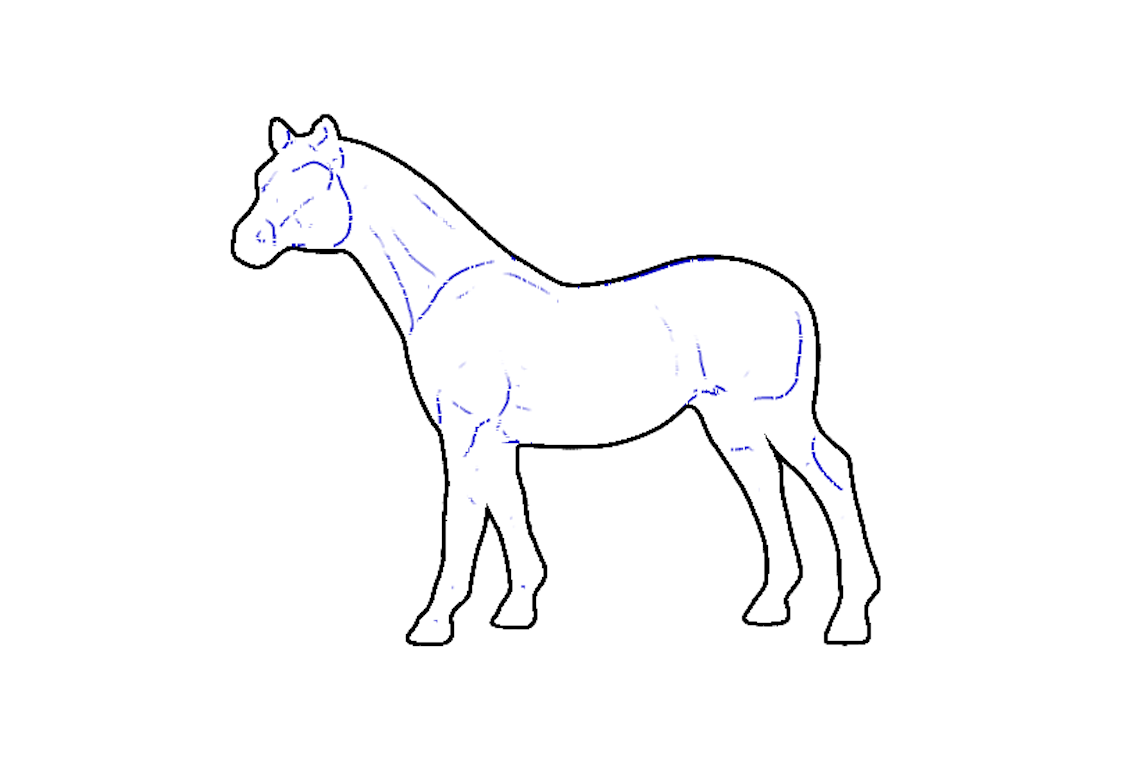}
b) \includegraphics[ width = 0.3 \linewidth, height = 0.3 \linewidth]{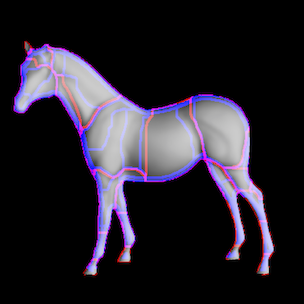}
\caption{Ideal line drawings, as modeled by suggestive contours {\rm \cite{DeCarlo03}}, relate to the MS complex of $1$-cells of both the image and slant scalar functions.  In particular, lines are often drawn at the 1D intersections of these two MS complexes.  {\rm (a)} The suggestive contours by {\rm \cite{DeCarlo03}}. {\rm (b)} The persistence simplified $1$-cells of the image function (red) and persistence simplified $1$-cells of the slant function (blue).  In cases where the two MS complexes don't exactly align, the value of $K$ is not large enough.}
\label{fig:horse_compare_MS}
\end{center}
\end{figure}

A consequence of the global nature of the MS complex is that it provides a qualitative solution
that segments the surface into salient parts as in Figure~\ref{fig:recon_1_cells}(b). There is a maximum on each of the four primary lobes, plus
several others. The part regions surrounding these are delimited by MS 2-cells, as are the interior (less reliable) 2-cells.  It is these interior 2-cells
that will shift with the light sources.


A remaining question is how to quantitatively reconstruct a scalar field from only knowledge of its critical contours.  This question, for the complete 1-cells, has been considered, for example, in \cite{Giorgis15, Weinkauf10}.  In Figure \ref{fig:recon_1_cells}(b), we show a simple example for the furrow object that the segmentation induced by 1-cells of the MS complex can be sufficient for a qualitative understanding of the slant.  We used the results from row 1 of Figure \protect \ref{fig:furrow_MS} and diffused the slant value from the occluding contour (where the slant is  $\pi/2$) onto the critical contour.  Then, we applied an inpainting algorithm \cite{DErrico} to ``reconstruct'' the scalar slant field.  We admit that this is a simple example, but in \cite{Gerber10}, one can see more complex examples of how the graph structure of the MS complex can capture the essential phenomena of
real-world data.



\begin{figure}[t]
\begin{center}
a) \includegraphics[width = 0.25 \linewidth]{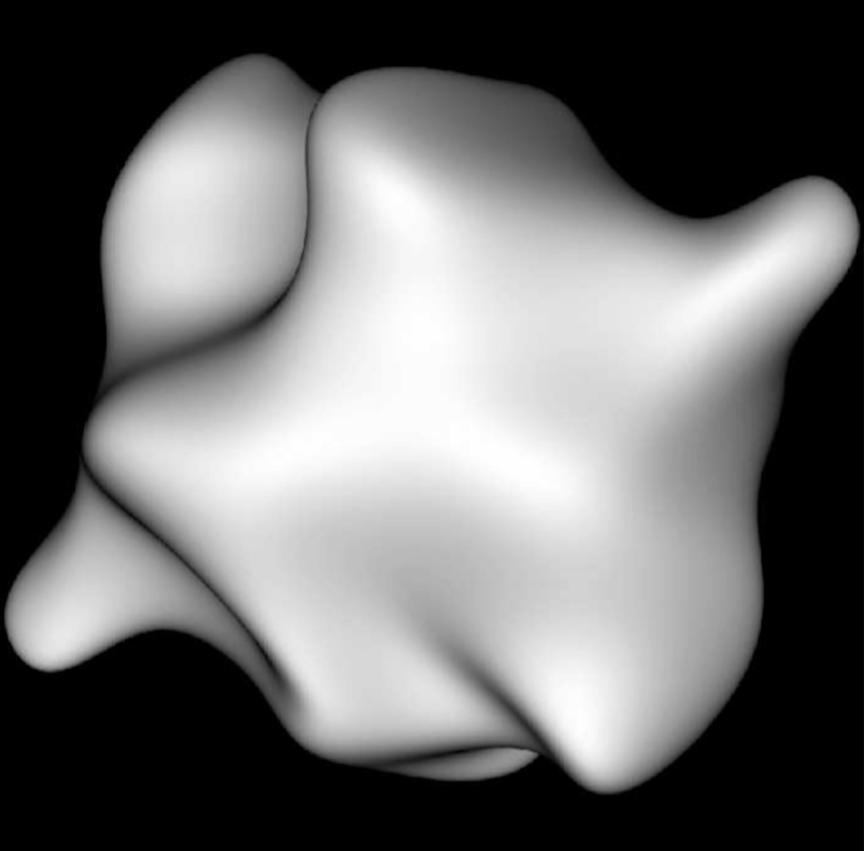}
b) \includegraphics[width = 0.25 \linewidth]{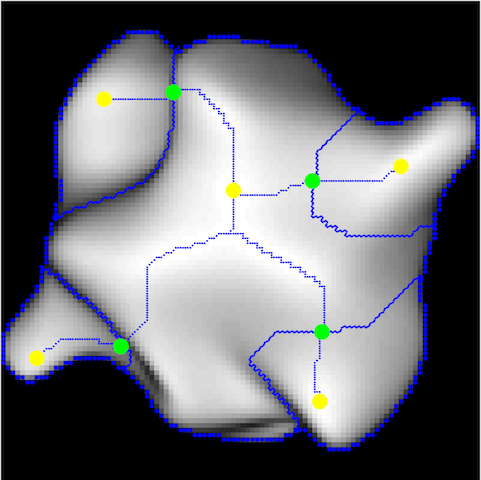}
c) \includegraphics[width = 0.32 \linewidth]{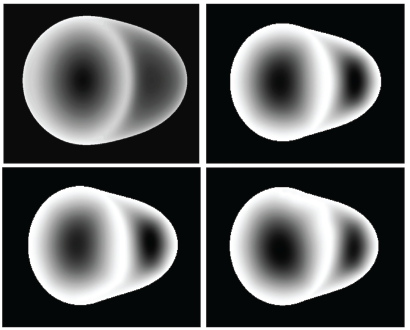}
\caption{{\rm (a), (b)} This figure illustrates how a surface is segmented by the full MS complex for the shading into salient parts. Notice the four major lobes, pointing outward like
the ends of an ``X,'' plus some interior parts. Crisp maxima in intensity signal the four dominant lobes. The middle maximum signals an interior part.  {\rm (c)}  True slant function (upper left).  Remaining three figures: Sample slant reconstructions from images in row $1$ of Figure {\rm \ref{fig:furrow_MS}}.  We used a linear inpainting algorithm {\rm \cite{DErrico}} and knowledge of the position of red critical contour corresponding to a $1$-cell in each of the images in Figure {\rm \ref{fig:furrow_MS}}.  We also used slant information at the occluding boundary, where the normal must be perpendicular to the view direction.  Note the strong similarity between the slant reconstructions even though the original images are pointwise very different.}
\label{fig:recon_1_cells}
\end{center}
\end{figure}

\section{Conclusion}

We seek a biologically plausible approach to 3D inferences in which an imaged surface is represented via a set of (isophote) contours or equivalent flows.  This allows us to separate those portions of the flow that are stable from those that wander, with respect to lighting and the image formation process. We believe this will result in a more robust, nearly invariant approach. To achieve this,  we have defined critical contours, computable from the image, with two important characteristics.  First, we showed that they are stable over changes in the rendering function.  Second, they relate to the MS complex of the surface slant.  This allows us to interpret critical contours as boundaries of surface features. Thus, the $K$-critical contours are part of a meaningful segmentation of the surface shared by almost all our admissible renderings.  (As $K \rightarrow \infty$, ``almost'' becomes ``all.'')  It is these stable contours with which we hope to transition from a local (individual gradients) representation to a more global (unions of bumps) representation.

Further, using the MS complex reveals relationships between shading inferences and shape-from-sketching, under the same model.  Certain (e.g., isotropic) textures may also allow for a similar analysis, when based on estimated foreshortening rather than intensity.  In addition, the invariance of the MS complex to monotonic transformations relates to psychophysical observations seen in \cite{doi:10.1167/13.5.10, vergne12}.  Modeling with the MS complex allows us to assign meaning to individual contours as, e.g., the boundary of a ``bump.''  By seeking this much weaker surface structure than, e.g., a 3D mesh, we hope to avoid most of the ill-posedness inherent in the 3D reconstruction problem.

We are focusing on two future directions.  Critical contours rarely completely segment the image, as the MS complex may also have unstable 1-cells.  Thus, we are pursuing methods  to find the ``nearest segmentation'' to a set of critical contours to complete the complex.  Second, we are analyzing the qualitative conclusions that can be drawn from a full segmentation.  A constraint labeling problem arises, namely, which contours  bound locally convex parts and which bound locally concave parts.

To summarize, we are arguing for the use of critical contours in 3D shape reconstruction from shading and contours.  These critical contours are part of the MS complex of the image and give a shape description that is stable, qualitative, and meaningful.  Therefore, reconstruction algorithms explicitly using these image features should be more stable while also explicitly capturing important surface features.  We believe studying these topological properties will aid our understanding of how the human visual system is able to see a veridical 3D shape under complex and noisy renderings.

\section{Appendix}

\subsection{Comparison of critical contours and suggestive contours}
Suggestive contours are contours (drawn from the surface mesh) that illustrate shape \cite{DeCarlo03}.   Their generating equation (notation from \cite{DeCarlo03}) is $D_{\bm{w}} (\bm{N} \cdot \bm{v}) = 0$, where $\bm{v}$ is the view direction and $\bm{w}$ is the view direction projected onto the tangent plane.  They are the set of minima of $N \cdot \bm{v}$ in direction $\bm{w}$.  The slant function $\sigma(x, y)$ is simply the angle between $N$ and $\bm{v}$ and so is an $\arccos$ transformation of $N \cdot \bm{v}$.  We note that extremal curves are invariant under strictly monotonic transformations via the chain rule, so the suggestive contours can also be seen as maxima of the slant function $\sigma$ in the direction $\bm{w}$.

Under orthographic projection, it is a simple calculation to see that $\bm{w}$ is proportional to the surface gradient $\nabla S$ (or tilt direction) projected onto the image plane.  Thus, under orthographic projection, we can rewrite the generating equation for suggestive contours as the set of points satisfying $\nabla S \cdot \nabla \sigma = 0$.


We compare to a critical contour; these are the 1-cells of $\sigma$ by Theorem \ref{main_theorem}, yet computable from the image.   We see that suggestive contours depend on whether $\nabla \sigma$ points in the gradient direction, whereas critical contours depend on the global properties of the $\nabla \sigma$ field.


%

\subsection{Proof of Lemma \ref{lemma:shading_derivs}}
\begin{customthm}{3}
Let $\alpha(t), I_\alpha, \bm{u}, \bm{w}$ be defined as in section  {\rm \ref{sec:contour_is_shading}}.  The sequence of shaded images $\{ I^\sigma \}$ converge pointwise to the original line drawing $I_\alpha$ and have the following properties on the derivatives as $\sigma \rightarrow 0$:
\begin{enumerate}\leftskip38pt
\item[{\rm (1)}] $I^\sigma_{ww} (\alpha(t)) \rightarrow -\infty$ for every $t \in [0, 1]$.
\item[{\rm (2)}] $I^\sigma_{uu} (\alpha(t)) \rightarrow \infty$ for $t = \{0, 1\}$.
\item[{\rm (3)}] There exists a constant $M$ such that  $| I^\sigma_{w} |, |I^\sigma_{u}|, |I^\sigma_{uw}| < M$ for every $t \in [0, 1]$.
\end{enumerate}
\end{customthm}


\begin{proof}
Start from \eqref{eqn:shading_limit}:
\begin{align}
I^\sigma (\xi, \eta) & =  \frac{1}{\sqrt{2 \pi} \sigma} \int I_\alpha (p + x \bm{u}) e^{-\frac{\eta^2}{2 \sigma^2}}    e^{-\frac{(x - \xi)^2}{2 \sigma^2}} \, dx \\
& = (I_\alpha \ast G_\sigma) e^{-\frac{\eta^2}{2 \sigma^2}}, \
\end{align}
which defines a sequence of shaded images so that $\lim_{\sigma \rightarrow 0} I^{\sigma} = I_\alpha$.  We now compute image derivatives, up to second order, along our contour for these shaded images, $I^{\sigma}$.  Taking the limit as $\sigma \rightarrow 0$, we will inherit derivatives in the limit for $I_\alpha$.


\vskip-\lastskip{\pagebreak}

We differentiate $I^\sigma (\xi, \eta)$ in the tangent direction:
\begin{align}
\frac{\partial}{\partial \xi} I^\sigma (\xi, \eta) & = \frac{\partial}{\partial \xi}  \left( (I_\alpha \ast G_\sigma) e^{-\frac{\eta^2}{2 \sigma^2}} \right)\\
& = \left( \left( \frac{\partial}{\partial \xi}  I_\alpha \right) \ast G_\sigma \right) e^{-\frac{\eta^2}{2 \sigma^2}}. \
\end{align}

For each point $p$ on $\alpha(t), t \in (0, 1)$, the limit as $\sigma \rightarrow 0$ is $\frac{\partial}{\partial \xi} I_\alpha \rvert_p$ as expected.  (That is, the image derivative along the contour $\alpha$ is the limit of the derivatives along the shading approximations to the contour.)

We repeat the same process for the remainder of the derivatives up to second order and get
\begin{align}
\lim_{\sigma \rightarrow 0} \frac{\partial}{\partial \eta} I^\sigma (\xi, \eta) \biggr \rvert_p & = 0,\\
\lim_{\sigma \rightarrow 0} \frac{\partial^2}{\partial \xi^2} I^\sigma (\xi, \eta) \biggr \rvert_p & = I_\alpha''(t), \\
\lim_{\sigma \rightarrow 0} \frac{\partial^2}{\partial \eta^2} I^\sigma (\xi, \eta) \biggr \rvert_p\  & = -\infty,\\
\lim_{\sigma \rightarrow 0}  \frac{\partial}{\partial \xi} \frac{\partial}{\partial \eta} I^\sigma (\xi, \eta) \biggr \rvert_p\ & = 0.\
\end{align}

We also would like the image derivatives at the endpoints of $\alpha(t)$.  To calculate these approximations, we apply a Heaviside step function $H_0 (x)$ to the endpoints of the curve $\alpha(t)$.  To make the integral feasible, we Taylor approximate the intensity on the contour $I_\alpha (p + x \bm{u})$ up to second order:
\begin{align}
 I_\alpha (p + x \bm{u}) & = c_0 + c_1 x + c_2 x^2 \
\end{align}
for some constants $\{c_0, c_1, c_2\}$.  This ``approximation'' becomes exact as $\sigma \rightarrow 0$.  If we are at an endpoint $\alpha(0)$ and we move in the positive tangent direction, the image intensity is defined by this Taylor expansion.  If we move in the negative tangent direction, the image intensity is zero.  For example,
\begin{align}
\frac{\partial}{\partial \xi} I^\sigma (\xi, \eta) \biggr \rvert_{\alpha(0)}  & = \int \frac{\partial}{\partial \xi}  \left( I_\alpha (p + x \bm{u}) e^{-\frac{\eta^2}{2 \sigma^2}}    e^{-\frac{(x - \xi)^2}{2 \sigma^2}} H_0 (x) \right) \, dx \\
& = \int \frac{\partial}{\partial \xi}  \left( (c_0 + c_1 x + c_2 x^2 ) e^{-\frac{\eta^2}{2 \sigma^2}}    e^{-\frac{(x - \xi)^2}{2 \sigma^2}} H_0 (x) \right) \, dx \\
& =  \frac{c_1}{2} + \frac{c_0 + 2 c_2 \sigma^2}{\sqrt{2 \pi} \sigma}. \
\end{align}

As we require the contour intensity to be $0$ at the endpoint, we can set $c_0 = 0$ and calculate the limit of $\frac{\partial}{\partial \xi} I^\sigma (\xi, \eta) \rvert_{\alpha(0)}$ as $\epsilon \rightarrow 0$ to get $\frac{c_1}{2}$.

We can also calculate the other image derivatives at the endpoint $\alpha(0)$.  (Note that the other endpoint, $\alpha(1)$, is just the mirror version; we use $-H_0(x)$ instead of $H_0 (x)$.)\pagebreak\vspace*{-20pt}
\begin{align}
\lim_{\sigma \rightarrow 0} \frac{\partial}{\partial \eta} I^\sigma (\xi, \eta) \biggr \rvert_{\alpha(0)} & = 0,\\
\lim_{\sigma \rightarrow 0} \frac{\partial^2}{\partial \xi^2} I^\sigma (\xi, \eta) \biggr \rvert_{\alpha(0)} & = \infty, \\
\lim_{\sigma \rightarrow 0} \frac{\partial^2}{\partial \eta^2} I^\sigma (\xi, \eta) \biggr \rvert_{\alpha(0)} & = - \infty,\\
\lim_{\sigma \rightarrow 0}  \frac{\partial}{\partial \xi} \frac{\partial}{\partial \eta} I^\sigma (\xi, \eta) \biggr \rvert_{\alpha(0)}\ & = 0.\
\end{align}
These calculations are consolidated into Lemma \ref{lemma:shading_derivs}.
\end{proof}



\subsection{Proof of Lemmas \ref{lemma:crit_points} and \ref{lemma:beta_curves}}
We first prove the following lemma, Lemma \ref{lemma1}, that allows us to bound terms in a Taylor expansion of $\bm{DN}$ and $\bm{D^2 N}$ from image derivatives at a point.

\begin{lemma}
Assume the generic and rendering function assumptions in section {\rm \ref{sec:image_formation}} hold.  Let $p$ be a point on a $K$-critical contour $\alpha(t)$.
If $\, | D I_{p}| < M$, then $|| \bm{DN}_{N(p)} (\cdot) ||$ is bounded.  Similarly, $\, | D I_{p} | < M$  and $|I_{uw} (p)| < M$ imply $|| \bm{D^2 N}_{N(p)} (u, w) ||$ is bounded.
\label{lemma1}
\end{lemma}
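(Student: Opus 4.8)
The plan is to read off both bounds directly from the composition formulas \eqref{eqn:differential}--\eqref{eqn:differential2} relating image derivatives to derivatives of the normal map, differentiating once more for the second-order statement. The governing principle is that a bound on an inner product $DF^T X$ upgrades to a bound on $\|X\|$ exactly when $DF$ is kept uniformly away from being orthogonal to $X$; the rendering and generic assumptions of Section \ref{sec:image_formation} are what supply this transversality, together with the lower bound $\|DF\| > \epsilon$.

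First I would bound $\bm{DN}$. For a unit tangent direction $v$, \eqref{eqn:differential2} reads $I_v = DF^T\bm{DN}(v) = \|DF\|\,\|\bm{DN}(v)\|\cos\theta(DF,\bm{DN}(v))$. The hypothesis $|DI_p| < M$ (matching condition (3) of Definition \ref{defn_cc}) gives $|I_u|, |I_w| < M$ since $\{u,w\}$ is orthonormal. Invoking the generic assumptions to obtain a uniform positive lower bound $c$ on $|\cos\theta(DF,\bm{DN}(v))|$, and the rendering bound $\|DF\| > \epsilon$, I solve for $\|\bm{DN}(v)\| \le M/(\epsilon c)$ with $v = u, w$; as these span $T_pS$, the operator norm of $\bm{DN}$ is bounded.

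For the mixed second derivative I would differentiate $I = F\circ N$ once more. The Hessian chain rule gives
\[
I_{uw} = \bm{D^2F}(\bm{DN}(u),\bm{DN}(w)) + DF^T\bm{D^2N}(u,w),
\]
whence $DF^T\bm{D^2N}(u,w) = I_{uw} - \bm{D^2F}(\bm{DN}(u),\bm{DN}(w))$. The quadratic term is already under control: $\bm{D^2F}$ is bounded by Definition \ref{rendering_funct_assum}(2) and $\|\bm{DN}(u)\|,\|\bm{DN}(w)\|$ were bounded in the first step, while $|I_{uw}| < M$ by hypothesis. Thus the right-hand side is bounded, and dividing out $\|DF\|$ and the cosine factor as before yields a bound on $\|\bm{D^2N}(u,w)\|$.

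The hard part will be that last extraction: turning a bound on $DF^T\bm{D^2N}(u,w)$ into a bound on $\|\bm{D^2N}(u,w)\|$ again requires $DF$ to be uniformly transverse to $\bm{D^2N}(u,w)$. The generic angle conditions are stated for $\bm{D^2N}(u,u)$ and $\bm{D^2N}(w,w)$, not the mixed term, so I would either extend the same transversality to the mixed direction (it is generic in the identical sense) or use the linear-independence genericity---that at least two of $\bm{D^2N}(u,u),\bm{D^2N}(u,w),\bm{D^2N}(w,w)$ are independent---to place $DF$ in a frame where its component against $\bm{D^2N}(u,w)$ is controlled and nonvanishing. Making this quantitative and uniform in $t$ along $\alpha(t)$ is the only genuinely delicate point; once the cosine lower bounds are secured, the inner-product extractions themselves are routine.
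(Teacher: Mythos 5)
Your proposal follows essentially the same route as the paper's proof: bound $\bm{DN}$ via the first-order chain rule together with the generic angle condition and $\|DF\|>\epsilon$, then differentiate once more, isolate $DF^T\bm{D^2 N}(u,w)$, bound the $\bm{D^2 F}$ term using the rendering assumption and the first step, and extract $\|\bm{D^2 N}(u,w)\|$ by transversality. The ``delicate point'' you flag---that the stated generic angle conditions cover only $\bm{D^2 N}(u,u)$ and $\bm{D^2 N}(w,w)$ and not the mixed term---is genuine, but the paper does not resolve it either: its proof simply asserts that, applying the second generic property, $|\bm{D^2 N}(u,w)|$ is ``generically'' bounded, i.e., it implicitly extends the transversality to the mixed direction exactly as your first suggested fix does.
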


\begin{proof}
~

\begin{align}
|DI_{p} ( \cdot)  | & < M,\\
 |DF^T \bm{DN}_{N(p)} (\cdot )| & < M. \
\end{align}

By genericity property 2, we must have $|| \bm{DN}_{N(p)} ( \cdot) ||$ bounded in Frobenius norm ${\rm for\ all}\break p \in \alpha(t)$.  We see here that this prevents an infinitesimal change in the rendering function (that is, $DF$) resulting in an unbounded change in the image gradient $\nabla I$.

We repeat the same argument for $| I_{uw} | < M$, taking one further derivative and leaving off the $p$ subscript for clarity:
\begin{align}
| I_{uw} | & < M, \\
| \bm{D^2 F} ( \bm{DN}(u), \bm{DN}({w})) + {DF}^T \bm{D^2 N}({u}, {w}) | & < M,\\
| {DF}^T \bm{D^2 N}({u}, {w})| & <  M + | \bm{D^2 F} ( \bm{DN}({u}), \bm{DN}({w}))|. \
\end{align}

On the left-hand side, $| {DF}^T \bm{D^2 N}({u}, {w})|$ is bounded as each of $|| \bm{D^2 F} ||,  | \bm{DN}({u}) |,  | \bm{DN}({w}) |$ are bounded.  Applying the second generic property, we see that generically $| \bm{D^2 N} ({u}, {w}) |$ is also bounded.
\end{proof}

\subsubsection{Proof of Lemma  \ref{lemma:crit_points}}
\begin{customthm}{6}
Let $a \in \Omega_\alpha \subset \mathbb{R}^2$ be an endpoint of the $K$-critical contour with the conditions from Definition {\rm \ref{defn_cc}}.  Given a new rendering function $\tilde{F}$, resulting image $\tilde{I}$, and any $\delta > 0$, there exists a $K_0$ such that if $K > K_0$, the following holds:  $ \exists \, \,\tilde{a} \,\in \Omega_\alpha,$ such that $|| a - \tilde{a} || < \delta$ and $\tilde{a}$ is a critical point of $\tilde{I}$.
\end{customthm}

\begin{proof}
We will consider the image of the normal field on the Gauss sphere in the neighborhood of $N(a)$.  The main idea is that if a differentiable function has a large enough gradient at a point $a$, then it has a zero inside a neighborhood of $a$.  We take two derivatives of the equation $I(x, y) = F({N}(x, y))$ to get the following equation of tensors:
\begin{align}
\label{eqn:Iww}
\bm{D^2 I}_{(x, y)} ({u_1}, {u_2}) & = \bm{D^2 F}( \bm{DN}_{(x, y)} ({u_1}), \bm{DN}_{(x, y)} ({u_2})) + {DF}^T \bm{D^2 N}_{(x, y)} ({u_1}, {u_2}). \
\end{align}

To calculate $I_{ww} (a) $, we replace  ${u_1}, {u_2}$ both with ${w}$ and let $(x, y) = a$, where $a$ is any point on $\alpha(t)$.  By the concave rendering function assumption,
\begin{align}
\label{eqn:concave}
0 > \bm{D^2 F}( \bm{DN}_{a} ({w}), \bm{DN}_{a} ({w})). \
\end{align}

If $I_{ww} > K$, then
\begin{align}
K < {DF}^T \bm{D^2 N}_{a} ({w}, {w}). \
\end{align}

As $|| {\nabla F}|| < C_1$ by the rendering function assumptions in Definition \ref{rendering_funct_assum},
\begin{align}
\label{eqn:bound1}
\frac{K}{C_1} < || \bm{D^2 N}_{a} ({w}, {w}) ||.
\end{align}

Similarly, for $I_{uu}$, we get
\begin{align}
\label{eqn:bound2}
\frac{K}{C_1} < || \bm{D^2 N}_{a} ({u}, {u}) ||.
\end{align}

Recall that ${D \tilde{F}}$ is the differential of the second rendering function.  We expand the operator ${D \tilde{F}}^T \bm{DN}_{a}( \cdot)$ with a first order multivariate Taylor expansion of  $\bm{DN}_{a}( \cdot)$ around $a$.   For example, the derivative of $\bm{DN}_{a}( \cdot)$ in the $u$ direction is $\bm{D^2 N}_a ({u}, \cdot)$.
\begin{align}
\label{eqn:T_expand}
{D \tilde{F}}^T \bm{DN}_{a + q {u} + r {w}}  ( \cdot ) = {D \tilde{F}}^T \bm{DN}_a ( \cdot) +  q \, {D \tilde{F}}^T \bm{D^2 N}_a ({u}, \cdot) + r \, {D \tilde{F}}^T \bm{D^2 N}_a ({w}, \cdot) \
\end{align}

From \eqref{eqn:bound1} and \eqref{eqn:bound2}, we know that $|| \bm{D^2 N}_{N_a} ({u}, {u}) ||$ and $|| \bm{D^2 N}_{N_a} ({w}, {w}) ||$ are sufficiently large; we want to find a $(q_0, r_0)$ such that ${D \tilde{F}}^T \bm{DN}_{N_a + q_0 {u} + r_0 {w}}  ( \cdot )$ is precisely 0.

We use the first generic property to assume that the span of three vectors
$\{\bm{D^2 N}_{a} ({u}, {u}),\break  \bm{D^2 N}_{a} ({u}, {w}),  \bm{D^2 N}_{a} ({w}, {w}) \}$ contains at least two linearly independent ones.  This implies that ${D \tilde{F}}^T \bm{D^2 N}_a ({w}, \cdot)$ and ${D \tilde{F}}^T \bm{D^2 N}_a ({u}, \cdot)$ are not parallel vectors and thus they span a plane $P$. Generically, $P$ contains an intersection point with the unknown vector $-{D \tilde{F}}^T \bm{DN}_a ( \cdot)$.  That intersection point defines a $\{q_0, r_0\}$ satisfying ${D \tilde{F}}^T \bm{DN}_{a + q_0 {u} + r_0 {w}}  ( \cdot ) = 0$.

Define $\tilde{a} = a + q_0 {u} + r_0 {w}$.  It remains to show $|| a - \tilde{a}|| < \epsilon_{(C_1, C_2)}$.  Recall that ${D \tilde{F}}^T \bm{DN}_a ( \cdot )$ is a bounded vector by Lemma \ref{lemma1}. From \eqref{eqn:T_expand},
\begin{align}
-{D \tilde{F}}^T \bm{DN}_a (\cdot) & = q_0 \, {D \tilde{F}}^T \bm{D^2 N}_a ({u}, \cdot) + r_0 \, {D \tilde{F}}^T \bm{D^2 N}_a ({w}, \cdot). \
\label{eqn:matrix_equation}
\end{align}

The above matrix equation \eqref{eqn:matrix_equation} represents two equations:
\begin{align}
-{D \tilde{F}}^T \bm{DN}_a ({u}) & - q_0 \, {D \tilde{F}}^T \bm{D^2 N}_a ({u}, {u}) = r_0 \, {D \tilde{F}}^T \bm{D^2 N}_a ({w}, {u}),  \label{1st_vectorial_eqn}\\
-{D \tilde{F}}^T \bm{DN}_a ({w}) & - r_0 \, {D \tilde{F}}^T \bm{D^2 N}_a ({w}, {u}) = q_0 \, {D \tilde{F}}^T \bm{D^2 N}_a ({w}, {w}). \label{2nd_vectorial_eqn}\
\end{align}

We note that ${D \tilde{F}}^T \bm{DN}_a ({u})$  and $r_0 \, {D \tilde{F}}^T \bm{D^2 N}_a ({w}, {u})$ are bounded by Lemma \ref{lemma1}. We can then bound the first and third terms of \eqref{1st_vectorial_eqn} with some $\Gamma \in \mathbb{R}$:
\begin{align}
|{D \tilde{F}}^T \bm{DN}_a ({u})| +   |r_0 \, {D \tilde{F}}^T \bm{D^2 N}_a ({w}, {u})| & < \Gamma, \\
| q_0 {D \tilde{F}}^T \bm{D^2 N}_a ({u}, {u}) | & < \Gamma,\\
|q_0|\, || {D \tilde{F}} ||\, ||\bm{D^2 N}_a ({u}, {u}) ||\,  | \cos(\theta) |& < \Gamma, \\
|q_0| & < \frac{\Gamma}{ | \cos(\theta) | \, ||\bm{D^2 N}_a ({u}, {u}) || | {D \tilde{F}} |}, \label{req_generic_1} \\
|q_0| & < \frac{\Gamma C_1}{\epsilon^2 \, K},  \label{req_generic_2} \
\end{align}
where $\theta$ is the angle between the vectors ${D \tilde{F}}$ and $\bm{D^2 N}_a ({u}, {u})$.  By generic assumptions \eqref{gen_assumptions}, $| {D \tilde{F}} |, | \cos(\theta)| $ are bounded below by $\epsilon$.  To go from \eqref{req_generic_1} to  \eqref{req_generic_2}, we also substituted in from \eqref{eqn:bound2}.  Similarly, from   \eqref{2nd_vectorial_eqn}, we get $|r_0| < \frac{\Gamma C_1}{\epsilon^2 \, K}$.

Now, $r_0$ and $q_0$ are the displacements from point $a$ to point $\tilde{a}$:
\begin{align}
|| a - \tilde{a} || & = \sqrt{r_0^2 + q_0^2} \\
& < \sqrt{2} \frac{\Gamma C_1}{\epsilon^2 \,K}. \
\end{align}
Define $\delta (K) = \sqrt{2} \frac{\Gamma C_1}{\epsilon^2 \,K} $ to complete the proof.
\end{proof}

\subsubsection{Proof of Lemma \ref{lemma:beta_curves}}
\begin{customthm}{7}
Let $\alpha(t)$ be a $K$-critical contour with the conditions from Definition {\rm \ref{defn_cc}}.   Recall that $w(t)$ is the transversal direction in $\mathbb{R}^2$ to $\alpha(t)$ at each $t$.  Given a new rendering function $\tilde{F}$, resulting image $\tilde{I}$, and $\delta > 0$, there exists a $K_0$ such that if $K > K_0$, the following holds: Define two curves $\beta_1(t) =  \alpha(t) + \delta w$ and  $\beta_2(t) = \alpha(t) -\delta w $.   On $\beta_1(t)$, $D \tilde{I}_{\beta_1(t)} (w(t)) > 0$ and on $\beta_2(t)$, $D \tilde{I}_{\beta_1(t)} (w(t)) < 0$.

\end{customthm}

\begin{proof}
~
\begin{align}
K & < I_{ww} \\
& < {DF}^T \bm{D^2 N}({w}, {w})
\end{align}
by applying \eqref{eqn:Iww} and \eqref{eqn:concave}.  By the rendering function assumptions $|| {DF} ||$ is bounded by $C_1$:
\begin{align}
\frac{K}{C_1} & < || \bm{D^2 N}({w}, {w}) ||.
\end{align}

By the generic assumptions described in \eqref{gen_assumptions},
\begin{align}
|{D \tilde{F}}^T \bm{D^2 N}({w}, {w}) |  & = | {D \tilde{F}} |\, || \bm{D^2 N}({w}, {w}) || \, \cos(\theta) \\
& > \frac{K \epsilon^2}{C_1} \label{eqn:Sww_bound}
\end{align}

Expand the scalar function ${D \tilde{F}}^T \bm{DN}({w})$ to first order in the ${w}$ direction around any point on the contour $\alpha(t)$:
\begin{align}
D \tilde{I}_{\alpha(t) + s {w}} (w) & = {D \tilde{F}}^T \bm{DN}_{\alpha(t) + s {w}} ({w})  \\
& \approx {D \tilde{F}}^T \bm{DN}_{\alpha(t)} ({w}) + s \, {D \tilde{F}}^T \bm{D^2 N}_{\alpha(t)}({w}, {w}).
\label{vDNw}
\end{align}

Fix a $\delta$; it suffices to show that this quantity is positive for $s = \delta$ and negative for $s  = - \delta$.  If so, then we have shown, e.g., $D \tilde{I}_{\beta_1(t)} (w) > 0$ for ${\beta_1(t)} = \alpha(t) + \delta {w}(t)$.

Now note that the first term on the right-hand side of \eqref{vDNw} is upper bounded for all $t$ on the critical contour $\alpha(t)$ by Lemma \ref{lemma1}.  And note that the second term is lower bounded proportional to $K$ from \eqref{eqn:Sww_bound}.  Thus, for the given $\delta$ and for all $t$, we can choose a $K$ such that
\begin{align}
|{D \tilde{F}}^T \bm{DN}_{\alpha(t)} ({w})| < | \delta \, {D \tilde{F}}^T \bm{D^2 N}_{\alpha(t)}({w}, {w}) |.
\end{align}
We conclude that in \eqref{vDNw}, the first term is dominated by the second term, whose sign is dependent on $s$.  Thus, we get the necessary gradient conditions on  $D \tilde{I}$.
\end{proof}

\subsection*{Acknowledgments}
We thank Steve Cholewiak, Roland Fleming, and Daniel Holtmann-Rice for useful discussions.

\end{document}